\newtheorem{propositionappendix}{Proposition}
\def\xib{{\boldsymbol{\xi}}}
\newcommand{\KL}{D_{\mathrm{KL}}}
\newcommand*\colourcheck[1]{%
  \expandafter\newcommand\csname #1check\endcsname{\textcolor{#1}{\ding{52}}}%
}
\newcommand{\xmark}{\ding{55}}%
\begin{document}

%

%
\runningauthor{Paula Cordero Encinar, Tobias Schr\"oder, Peter Yatsyshin, Andrew B. Duncan}

\twocolumn[

\aistatstitle{Deep Optimal Sensor Placement for Black Box Stochastic Simulations}

\aistatsauthor{Paula Cordero Encinar${}^1$ \And Tobias Schr\"oder${}^1$ \And Peter Yatsyshin${}^2$ \And Andrew B. Duncan${}^{1,2}$}

\aistatsaddress{${}^{1}$Imperial College London \And  ${}^{2}$ Alan Turing Institute}]

\begin{abstract}
Selecting cost-effective optimal sensor configurations for subsequent inference of parameters in black-box stochastic systems faces significant computational barriers.
We propose a novel and robust approach,  modelling the joint distribution over input parameters and solution with a joint energy-based model, trained on simulation data. Unlike existing simulation-based inference approaches, which must be tied to a specific set of point evaluations, we learn a functional representation of parameters and solution. This is used as a resolution-independent plug-and-play surrogate for the joint distribution, which can be conditioned over any set of points, permitting an efficient approach to sensor placement. 
We demonstrate the validity of our framework on a variety of stochastic problems, showing that our method provides highly informative sensor locations at a lower computational cost compared to conventional approaches.
\end{abstract}

\section{INTRODUCTION}
\begin{table*}[t]
\caption{Comparison of scope of Functional Neural Couplings (ours) and existing methods. 
 }\label{table_comparison}
\vspace{-2mm}
\begin{center}
\footnotesize
\begin{tabular}{ccccc}
\toprule
\multirow{2}{*}{\bf Methods} & Direct PDE &  Neural Operator  & Neural Operator w/ & Functional Neural \\
& solves & surrogate & oracle noise &  Coupling \textbf{(ours)}\\
\midrule
Low-cost evaluation & \textcolor{red}{\xmark}&   \greencheck  & \greencheck & \greencheck\\
Low-cost inversion & \textcolor{red}{\xmark}&  \greencheck  &  \greencheck & \greencheck\\
Supports sensor placement pipeline & \greencheck  &  \greencheck & \greencheck & \greencheck \\
Supports stochastic PDEs &  \greencheck    & \textcolor{red}{\xmark}&  \greencheck  & \greencheck \\
Tractable likelihood & \textcolor{red}{\xmark}  &  \textcolor{red}{\xmark}  & \textcolor{red}{\xmark} & \greencheck \\
\bottomrule
\end{tabular}
\end{center}
\vspace{-2mm}
\end{table*} 

A common challenge across many areas of science and engineering is recovering an unobserved parameter or input from noisy, indirect observations. Examples arise in imaging (e.g. in-painting and up-scaling), weather forecasting and oceanography (flow reconstruction), material design (molecular force-field reconstruction) and medicine (computed tomography).

Such \emph{inverse
problems} are often ill-posed, meaning that there may be several choices of model parameters which are consistent with the observations, or that the output is highly sensitive to errors in the parameter. Many highly relevant inverse problems are also large-scale, where the map from input parameters to output is sufficiently complex to be effectively \emph{black-box}, and its evaluation is highly computationally intensive.
\\\\
An inverse problem can be formulated as the following system of equations
$$
\begin{aligned}
    u &= \mathcal{G}(\kappa)\\
    \mathbf{y} &= \mathcal{O}(u) + \mathbf{\eta},
\end{aligned}
$$
where the \emph{forward problem} $\mathcal{G}$ is an associated mathematical model, mapping an (unobserved) input parameter $\kappa$ to a spatially-varying field $u$, e.g. the solution of a partial differential equation (PDE). In typical settings, we do not observe $u$ completely, but rather make partial observations $\mathbf{y}$ through an \emph{observation operator} $\mathcal{O}$ subject to noise $\mathbf{\eta}$.

For this work we are interested in settings where the relationship between $\kappa$ and $u$ is non-deterministic, so that $\mathcal G$ possesses intrinsic stochasticity.  This challenge arises in settings where the forward model is not a complete representation of reality, and additional (unobserved) noise sources are introduced to characterise external/environmental effects. 
Such forward models are typically formulated as stochastic partial differential equations (SPDEs).

The Bayesian approach to inverse problems \citep{Stuart_2010} is a systematic method for uncertainty quantification of parametric estimates in which the parameter and the observations are viewed as coupled-random variables. By placing a prior for the distribution of the parameter $\kappa$, Bayes' rule yields a posterior distribution for $\kappa$ given the observations, thus providing a probabilistic method to solving inverse problems. 

The quality of the inference will depend on how informative the observations are which is characterised by the observation operator $\mathcal{O}$ which is typically a vector of point-wise evaluations of $u$ at a fixed set of \emph{sensor locations} within the problem domain.  In settings where we have control over sensor placement, we have a strong incentive to choose them optimally, i.e. to be as informative of $\kappa$ as possible.

While optimal sensor placement has been widely studied in the scientific computing literature \citep{bed_95, rainforth2023modern},  established approaches are infeasible for larger-scale problems.
Typical approaches to sensor placement involve a nested Markov Chain Monte Carlo (MCMC) approach to perform Bayesian Experimental Design (BED). 
In the setting of a PDE-governed inverse problem, each MCMC step necessitates at least one simulation of the underlying PDE, meaning that the process quickly becomes computationally infeasible \citep{alexanderian2021optimal}.

A natural strategy would be to perform optimal sensor placement over a surrogate model \citep{gramacy2020surrogates}, learnt from forward model evaluations.
However, such approaches are not applicable if the parameter is functional or if the forward operator is intrinsically stochastic. In the latter case, the likelihood becomes intractable due to the introduction of auxiliary noise variables, and one must resort to expensive pseudo-marginal MCMC methods \citep{pseudo_likelihood}.

To overcome these limitations we propose \emph{Functional Neural Couplings}, a novel probabilistic model of the relationship between inputs and outputs of a stochastic forward problem. Functional Neural Couplings provide accurate likelihood evaluations for $\kappa$ given noisy observations of $u$, which enables low-cost solution of both the forward and inverse-problem, despite requiring a single training phase.  Specifically, we make the following contributions:

\begin{enumerate}
    \item We introduce \emph{Functional Neural Couplings} (FNC), a probabilistic model of the joint distribution of a functional parameter $\kappa$ and the functional solution of the forward problem $u$ as a joint Energy-Based Model (EBM). 
    \item We leverage implicit neural representations (INR) to express functional data as low-dimensional latent codes. This introduces a new way to extend the scope of energy-based models to functional data on diverse geometries and makes our approach resolution independent, i.e. the model can be queried from functions evaluated at arbitrary evaluation points. 
    \item Finally, we use the Functional Neural Coupling framework to introduce the first computationally feasible method for deep optimal sensor placement for stochastic inverse problems with black-box noise.
\end{enumerate}

\subsection{Related Work}
\paragraph{Neural Operators}
Neural Operators are a class of deep learning architectures designed to learn maps between infinite-dimensional function spaces. As data-driven methods, they do not require any knowledge of the underlying PDE.  Additionally, they are resolution invariant in the sense that they can generate predictions at any resolution. Examples include DeepONets and Fourier Neural Operators (FNO) \citep{ kovachki2021neural, li2021fourier}.
A related approach, CORAL \citep{serrano2023operator} uses an encode-process-decode structure to learn functional operators between functions on non-uniform geometries by leveraging INRs as in our framework.

Some works have leveraged Neural Operators for solving inverse problems, e.g. \citet{inverse_molinaro_23}. 
\citet{long2024invertible} adopts a Bayesian approach by introducing an invertible FNO enhanced with a VAE that allows for posterior inference.
However, these methods assume only observational noise and cannot account for stochasticity in the forward operator.

\paragraph{Generative Models On Function Spaces}
Recent work has sought to extend generative models to distributions on function spaces, which can therefore capture stochastic relationships between the input parameter $\kappa$ and $\mathcal{G}(\kappa)$. 
\citet{rahman2022generative} propose a Generative Adversarial Neural Operator for learning function data distributions.
\citet{baldassari2023conditional}, \citet{franzese2023continuoustime}, \citet{pmlr-v206-kerrigan23a}, \citet{lim2023scorebased} and \citet{pidstrigach2023infinitedimensional}
generalise diffusion models to operate directly on a Hilbert space. In contrast to our approach, these methods do not immediately yield a likelihood which can be used for downstream tasks. 
\cite{mishra2022pi} propose a Variational Autoencoder approach which employs a function space decoder. Similar approaches include Neural Processes \citep{garnelo2018conditional}, Energy-Based Processes \citep{yang2020energy} and Functional EBMs \citep{pmlr-v206-lim23a}.

\paragraph{Inverse Problems and Sensor Placement}
Methods that use black-box simulators for the statistical inference of underlying finite dimensional parameters are well-established and typically referred to as likelihood-free or simulation-based inference methods \citep{cranmer2020frontier}. In recent years, neural networks have been employed to learn synthetic likelihood surrogates with normalising flows \citep{papamakarios2019sequential} and energy-based models \citep{glaser2022maximum} as in our work. 
However, these approaches neither apply to functional data nor address sensor placement. 

The problem of optimising sensor locations is typically approached as a combinatorial optimisation task by discretising the domain into a finite grid \citep{jsan9030031, 10.1115/1.1410929, Andersson_environmental}, while we are able to select sensor placement sites across the entire problem domain.

Sensor placement in the context of inverse PDE problems has mainly been addressed using traditional MCMC approaches \citep{pseudo_likelihood, alexanderian2021optimal}. The number of simulations of the forward system is orders of magnitude higher in classical MCMC-based approaches than in our approach and thus not suitable for complex situations with expensive simulators such as stochastic PDEs.

\section{PROBLEM SETTING}
We consider a stochastic forward problem $\kappa\rightarrow u = \mathcal{G}(\kappa, \omega)$, where the random variable $\omega$ captures the intrinsic stochasticity. The function-valued parameter $\kappa\in \mathcal A$ is defined on a domain $\Omega \subseteq \mathbb R^{d_x}$. For a forward model which involves the solution of a (S)PDE, the parameter $\kappa$ could be an initial condition, boundary condition, spatially-varying coefficient field or forcing term, but our framework is not restricted to these settings, or SPDEs more broadly.

At training time we have access to a black-box stochastic simulator of the forward dynamics to obtain a solution $u=\mathcal{G}(\kappa, \omega)$ for any given $\kappa\in \mathcal A$. To reflect real-world settings, the realisation or distribution of the stochastic contribution $\omega$ is assumed to be \emph{unknown} to us at all times.

\paragraph{Objective}

At test time, the objective is to find optimal sensor positions $\xib$ which maximise the information gained on $\kappa\in\mathcal{A}$ through inference based on noisy observations  $\mathbf y = \mathcal{O}(u) + \eta = (u(\xib_j)+\eta_j)_{j=1}^D$ at sensor locations $\{\xib_j\}_{j=1}^D$. Here, the observational noise $\boldsymbol{\eta}$ is a property of the sensor and is different from the intrinsic stochasticity of the system.

\section{FUNCTIONAL NEURAL COUPLINGS}
We propose to approximate the solution operator $\mathcal G$ using a probabilistic model $p_\theta$ that learns the joint distribution of $(\kappa, u)$ such that high likelihood regions of $p_\theta$ correspond to solutions $u = \mathcal G(\kappa)$. 
To do so, using training data from simulations $\{(\kappa_i(\mathbf x_j^i), u_i(\mathbf x_j^i))_{j = 1}^{N_i}\}_{i=1}^M$, $\mathbf x_j^i\in \Omega$, we encode the functions $\kappa_i$ and $u_i$ into finite-dimensional latent codes $\mathbf z_{\kappa, i} \in\mathbb{R}^{d_{z_\kappa}}, \mathbf z_{u,i}\in\mathbb{R}^{d_{z_u}}$ by employing implicit neural representations \citep{dupont2022coin}. Critically, no evaluation grid has to be defined for this step.
On the finite-dimensional representation space, we learn the joint distribution of the latent codes using a joint energy-based model $p_\theta(\mathbf z_\kappa, \mathbf z_u) \propto \exp(-E_\theta(\mathbf z_\kappa, \mathbf z_u))\,$. The neural network architecture of the joint energy-based model is described in the Appendix. Our training workflow is visualised in Figure \ref{fig:diagram}. 
\begin{figure}[t]
 \hspace{1cm} \centering
  \includegraphics[width=0.55\textwidth]{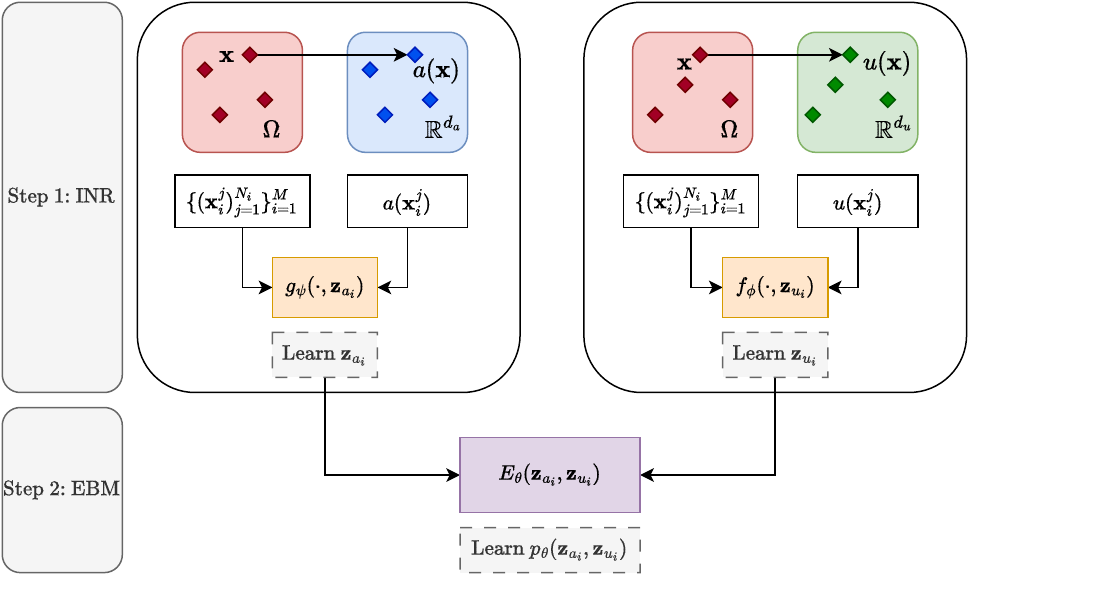}
  \caption{Workflow for the training of the  INR and joint energy-based model. Layout based on \citet{serrano2023operator}.}
  \label{fig:diagram}
\end{figure}
 
\subsection{Learning Implicit Neural Representations}
Following COIN++ \citep{dupont2022coin}, we compress the functional data points $(\kappa_i)_{i=1}^M$ and $(u_i)_{i=1}^M$ into implicit neural representations defined by $\kappa_i(\cdot) = g_\psi(\cdot, \mathbf z_{\kappa, i})$, $\mathbf z_{\kappa,i} \in\mathbb{R}^{d_{z_\kappa}}$, and $u_i(\cdot) = f_\phi(\cdot, \mathbf z_{u,i})$, $\mathbf z_{u,i}\in\mathbb{R}^{d_{z_u}}$. 
We train the neural representations by minimising the mean square error between the function value and the neural network prediction at the evaluation points. 
Each layer of $g_\psi$ and $f_\phi$ takes the form of a SIREN layer $\sin(\omega_0(W\mathbf h + \mathbf b + \boldsymbol\beta))$ \citep{sitzmann2019siren}. While the weights and biases $W, \mathbf b$ of each layer are shared among all data points, the shifts $\boldsymbol \beta$ depend differentiably on each data point latent code $\mathbf z$ and are trained individually for each functional data point.
That is, the training consists of a double optimisation loop, the inner loop updates the modulations $\mathbf z_{\kappa,i}, \mathbf z_{u,i}$ while the outer loop updates the shared parameters, $\psi= \{ W_{\psi}, \mathbf b_{\psi}\}, \phi= \{ W_{\phi}, \mathbf b_{\phi}\}$. 
This produces highly compressed latent representations $(\mathbf z_{\kappa,i}, \mathbf z_{u,i})_{i=1}^M$, which can be decoded efficiently by applying the maps $g_\psi$ and $f_\phi$ , respectively. 
Thus, the INR component is crucial to effectively learn functional relationships, without being limited to a particular discretisation of the domain.

Due to the generalisation capabilities of the INR, we train it on datasets of increasing size until the reconstruction error stabilises on a validation dataset. It is important to remark that this is a practical choice to optimise one of the most computationally expensive steps — learning the INR — without sacrificing performance.
The latent codes $\mathbf{z}$ are then computed for the entire dataset, keeping the shared parameters of the INR fixed.

\subsection{Functional Neural Couplings with Joint Energy-Based Models}
Energy-based models (EBMs) \citep{ebm_lecunn} are unnormalised statistical models of the form $\exp(-E_\theta)$, where the energy-function $E_\theta$ is typically modelled with a scalar-valued neural network. 
We assume lossless compression in the INR and learn the joint distribution of $\kappa$ and $u=\mathcal G(\kappa)$ as a joint energy-based model over tuples $\mathbf z_{i} = (\mathbf z_{\kappa, i}, \mathbf z_{u, i})$. 
Since unnormalised models are not amenable to optimisation with maximum likelihood estimation, we explore training the model with contrastive divergence \citep{Hinton06}, score-based methods \citep{JMLR:v6:hyvarinen05a, vincent_pascal, song2020improved} and energy discrepancy \citep{schroder2023energy}, achieving the best results with energy discrepancy. Note that the joint energy-based model component plays a central role in modelling the stochastic relationship between parameters and solutions.

\section{SENSOR PLACEMENT WITH BAYESIAN EXPERIMENTAL DESIGN}

The goal is to determine optimal sparse sensor placement positions $\xib = \{\xib_1, \xib_2, \dots, \xib_D\} \subset \Omega$ for the inference of $(\kappa, u= \mathcal G(\kappa))$ based on $\mathbf y = u(\xib)+\boldsymbol\eta$. 
We assess the utility of a sensor  position by calculating the expected information gain over the prior as measured by relative entropy, i.e. we use the utility function $U(\xib) := \mathbb E_{p(\mathbf y\vert\xib)}\KL{(p(\mathbf z_\kappa, \mathbf z_u \vert \mathbf y, {\xib})}\,\Vert\,{p_\theta(\mathbf z_\kappa, \mathbf z_u)})$, which can be evaluated at any point $\xib$ thanks to the INR component.  
Using Bayes theorem, the utility can be rewritten as 
\begin{equation}\label{eq:utility_1}
        U(\xib)= \mathbb{E}_{p_\theta(\mathbf{z}_\kappa, \mathbf{z}_u)p( \mathbf{y}|\mathbf{z}_\kappa, \mathbf{z}_u,\xib)}\left[\log\frac{p(\mathbf{y}|\mathbf{z}_\kappa, \mathbf{z}_u, \xib)}{p(\mathbf{y}|\xib)} \right].
\end{equation}
Since $p(\mathbf{y}|\xib)$ is unknown, a naive Monte Carlo estimation provides a nested Monte Carlo estimator \citep{Rainforth2018OnNM}, which approximates the inner and outer integrals of the utility but suffers from a slow rate of convergence. Therefore, following on \citet{pmlr-v108-foster20a}, we use the prior contrastive estimation (PCE) bound given by
\begin{align*}\label{eq:pce_bound}
    \Hat U_{\text{PCE}}(\xib)  = \mathbb{E}\left[\log \frac{p(\mathbf y|\mathbf z_{\kappa, 0}, \mathbf z_{u, 0}, \mathbf \xib)}{\frac{1}{L+1}\sum_{l=0}^L  p(\mathbf y|\mathbf z_{\kappa,l}, \mathbf z_{u,l},\xib)}\right]
\end{align*}
where the expectation is computed over the distribution $p_\theta(\mathbf z_{\kappa,0}, \mathbf z_{u,0})p(\mathbf y|\mathbf z_{\kappa,0}, \mathbf z_{u,0},\xib)p_\theta(\mathbf z_{\kappa,1:L}, \mathbf z_{u,1:L})$.
This alternative estimator significantly speeds up training and is a valid lower bound of the utility which becomes tight as $L\to\infty$ \citep{pmlr-v108-foster20a}. Further details are given in the Appendix.

The actual selection of optimal locations can be conducted sequentially, which leads to Bayesian adaptive design (BAD) \citep{rainforth2023modern} for sensor placement.
This framework iteratively places  sensors by incorporating observations of the solution from previously identified optimal locations, using them to guide the selection of the best sensor positions in subsequent steps.
In settings where multiple measurements can be taken in parallel, each step of the optimisation can select a batch of optimal sensor placement positions instead of just one.
More specifically, considering the sequence of locations and outcomes up to step  $t$ of the experiment, denoted by $\{\xib_1, \dots, \xib_{t-1}\}$ and $\{\mathbf{y}_1,\dots, \mathbf{y}_{t-1}\}$, respectively, we aim to maximise the utility given the history $h_{t-1} = \{(\xib_k, \mathbf{y}_k)\}_{k=1}^{t-1}$,
\begin{equation*}
    {U}({\xib_t}|h_{t-1})=\mathbb{E}\left[ \log\frac{p(\mathbf{y}|\mathbf{z}_\kappa,\mathbf{z}_u, \xib_t, h_{t-1})}{p(\mathbf{y}|\xib_t, h_{t-1})}\right]\,,
\end{equation*}
where $h_0 = \emptyset$ and the expectation is over  $p(\mathbf{z}_\kappa,\mathbf{z}_u|h_{t-1}) p (\mathbf{y}_t|\mathbf{z}_\kappa,\mathbf{z}_u, \xib_t, h_{t-1})$. This can also be interpreted as the utility in Eq. \eqref{eq:utility_1} with an updated prior and likelihood. However, in most cases $\mathbf{y}_t$ is independent of $h_{t-1}$ given $(\mathbf{z}_\kappa, \mathbf{z}_u, \xib_t)$, thus only the prior needs to be updated.

\subsection{Inference from sparse observations}
At inference time, we have noisy observations of the system at the selected optimal locations $\mathcal{D} = \{(\xib_{j}, \mathbf y_{j})\}_{j=1}^D$ with $ \mathbf y_j = u(\xib_j)+\eta_j$, where the observational noise $\eta_j\sim\mathcal{N}(0,\sigma^2)$ is to be distinguished from the random component inherent in the forward model which is captured by the energy-based model.
The posterior distribution of the latent representations $(\mathbf{z}_\kappa, \mathbf{z}_u)$ conditioned on the observed data is given by
\begin{align*} 
p(\mathbf{z}_\kappa,\mathbf{z}_u|\mathcal{D})&\propto p(\mathcal{D}|\mathbf{z}_\kappa, \mathbf{z}_u) p_{\theta}(\mathbf{z}_\kappa, \mathbf{z}_u) \notag\\
&= \prod_{j = 1}^n p(\xib_{j}, \mathbf y_{j}|\mathbf{z}_\kappa, \mathbf{z}_u) p_{\theta}(\mathbf{z}_\kappa, \mathbf{z}_u).
\end{align*}
The noise assumption together with the underlying forward model, results in $p(\xib_j,\mathbf y_j|\mathbf{z}_\kappa, \mathbf{z}_u)= \mathcal{N}(\mathbf y_j;f_{\phi}(\xib_j, \mathbf{z}_{u}), \sigma^2)$.
The desired parameter-solution pair $(\kappa, u=\mathcal G(\kappa))$ can now be sampled from the posterior using stochastic gradient Langevin dynamics \citep{sgld_11}.

\section{NUMERICAL EXPERIMENTS}
In this section, we test the performance of the proposed Functional Neural Coupling framework for optimal sensor placement in stochastic versions of a 1D boundary value problem, the 2D Darcy flow problem and the 2D Navier-Stokes equation. 
We compare our method with approaches based on Fourier neural operators. 
Our training data consists of $M$ pairs of parameters and their corresponding solutions evaluated at $N_i$ point observations that can be different across the $M$ function realisations, that is, $\{(\kappa_i(\mathbf x_j^i), u_i(\mathbf x_j^i))_{j = 1}^{N_i}\}_{i=1}^M$ with $\mathbf x_j^i\in\Omega$.
While the method can handle functional parameters through the INR encoding, we assume for simplicity in the first presented example that $\kappa$ is parametrised by a real-valued vector of finite dimension. 
We emphasise that direct evaluations of $\mathcal{G}$ are only required to train the INRs and the energy-based model. 
Once trained, no further evaluations of $\mathcal{G}$ are required, and the INRs and EBM are used exclusively for optimal sensor placement and inference, drastically reducing the computational burden when $\mathcal{G}$ is a computationally intensive simulator model.  
Additional experimental details (dataset generation, implementation, run times) and further numerical experiments are included in the Appendix.

\paragraph{Benchmarks} 
We compare our method for optimal sensor placement against using a Fourier Neural Operator (FNO)  surrogate for the forward model \citep{li2021fourier}. 
In this case, the posterior distribution of the parameter given observations of the solution is obtained following Section 5.5 of \citet{li2021fourier}, which is subsequently used to perform sensor placement.
As neural operators can only learn deterministic maps, they fail to incorporate the effect that a spatio-temporal external random signal has on the system described.  To provide a gold standard  baseline, we additionally consider the FNO with oracle noise from \citet{salvi2022neural} which assumes that the driving noise of the stochastic model, that we previously denoted as $\omega$, is observed and taken as input to the model. 
The FNO with oracle noise is an unattainable baseline in practice as it has the benefit of having full knowledge of the noise driving the forward model. 
In contrast, our approach does not rely on observing the driving noise $\omega$, which is typically the case in real-world situations where the system's intrinsic stochasticity cannot be directly measured.
For each surrogate method (ours, FNO and FNO with oracle noise), we determine optimal sensor locations using both adaptive and batch non-adaptive approaches. 
We compare the efficiency of optimally selected positions with the use of points from a Quasi Monte Carlo sequence \citep{quasi_random_harald_1992} within the domain. 

Note that we do not compare our approach with traditional methods for sensor placement, as our objective is to develop an efficient, lightweight framework for Bayesian sensor placement and classical approaches are computationally prohibitive or infeasible for the type of problems considered, often requiring run times on the order of several days.
Table \ref{table_comparison} offers a clear comparison of the key advantages of our approach over other methods, highlighting its distinctive characteristics.

\subsection{Boundary Value Problems In 1D}
Consider the boundary value problem (BVP) on the interval $[-1,1]$ given by the non-linear PDE
\begin{align*}\label{eq:conditions}
    &u''(x)-u^2(x)u'(x)=f(x), \quad u(-1) = X_a,\;\; u(1) = X_b,\nonumber\\
    &f(x) = -\pi^2\sin(\pi x)-\pi\cos(\pi x)\sin^2(\pi x),\nonumber\\
     &X_a\sim \mathcal{N}(a, 0.3^2),\;\; X_b\sim \text{Unif}(b-0.3, b+0.4), 
\end{align*}
where $a,b\sim \text{Unif}(-3, 3).$ The training data consists of pairs of parameters $(a,b)$ and their corresponding solution for a realisation of the random variables $X_a$ and $X_b$.
Once trained the INR and joint energy-based models, we carry out a toy sensor placement task, in which we aim to select two optimal locations.
We observe that, using our Functional Neural Coupling surrogate, posterior samples based on information from batch non-adaptive optimal locations match closer the ground truth compared to those from optimal adaptive locations or points of a random sequence (see Figure \ref{fig:bed_1d_non_linear_poisson} for the case with true parameters $a=-2$, $b = 1.5$). 
Table \ref{table_bed_1d_non_linear_poisson} shows the mean and standard deviation of the relative $L^2$ error norm across 100 experiments with different random values for $(a,b)$ for the different methods. The performance of our approach is comparable to that of FNO with oracle noise, which has complete knowledge of the intrinsic stochasticity. In addition,  our method consistently exhibits a lower standard deviation compared to the other approaches.

\begin{table*}[t]
\caption{Relative $L^2$ error norm for the posterior mean of the solution and MSE for the boundary conditions $a, b$ for sensor placement on the BVP. We average over 100 experiments with different random values for $(a,b)$.}
\label{table_bed_1d_non_linear_poisson}
\begin{center}
\footnotesize
\begin{tabular}{ccccc}
\textbf{Method} & \multicolumn{1}{c}{\bf Design points}  &$\Vert \widehat u-u_{\text{tr}}\Vert^2/\Vert u_{\text{tr}}\Vert^2$ & MSE($\hat a$) &  MSE($\hat b$)
\\ \hline \\
\multirow{ 3}{*}{Functional Neural Coupling (Ours)} & Adaptive BED & $0.124\pm0.101$& $0.135\pm 0.099$ & $1.441 \pm 1.003$\\
& Batch non-adaptive BED      &$\mathbf{0.097\pm 0.081}$ & $\mathbf{0.103\pm0.193}$ & $\mathbf{1.074\pm 0.906}$ \\
& Random sequence  &$0.331\pm0.259$ & $0.476\pm 0.888$ & $4.162\pm 3.885$ \\
 \midrule
\multirow{ 3}{*}{\shortstack{FNO surrogate \\ \citep{li2021fourier}}} & Adaptive BED      & $0.137\pm 0.308$ & $ 0.441\pm 1.337$  &   $1.224 \pm 2.084$\\
& Batch non-adaptive BED    & $0.125\pm 0.255$  & $0.428\pm 1.239$ & $1.111\pm 1.785$\\
& Random sequence  & $0.251\pm 0.507$ & $0.484\pm 0.947$ & $ 3.839\pm 7.457$ \\
 \midrule
  \midrule
\multirow{3}{*}{\shortstack{FNO w/ oracle noise surrogate \\ \citep{salvi2022neural}}} & Adaptive BED      &  $0.116\pm 0.250$& $\underline{0.021\pm 0.058}$  & $1.580\pm 2.888$\\
& Batch non-adaptive BED    & $\underline{0.090\pm 0.131}$ & {$0.041 \pm 0.105$} & $\underline{1.046\pm 1.620}$\\
& Random sequence & $0.356 \pm 0.613$  & $0.494 \pm 1.412$ & $8.372\pm 11.856$ \\

\end{tabular}
\end{center}
\end{table*}
\begin{figure}[t]
  \centering
     \begin{subfigure}[t]{\textwidth}
         \includegraphics[width=0.49\textwidth]{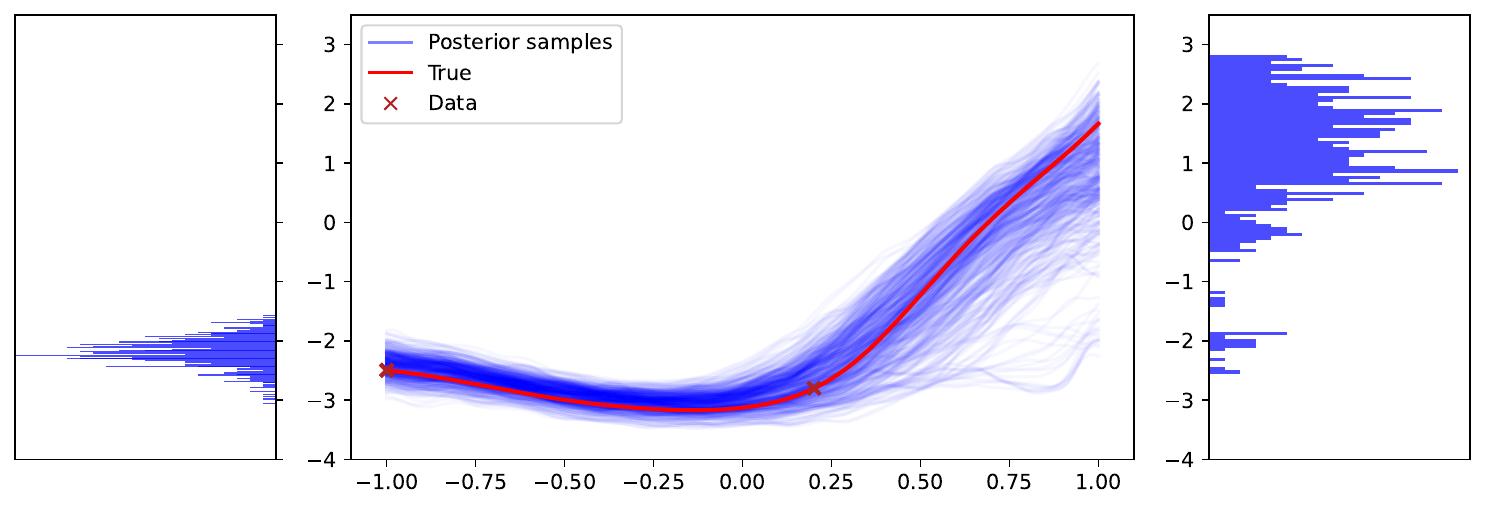}
         \label{fig:bed_1d_optimal_adaptive}
     \end{subfigure}
     \vfill
          \begin{subfigure}[t]{\textwidth}
         \includegraphics[width=0.49\textwidth]{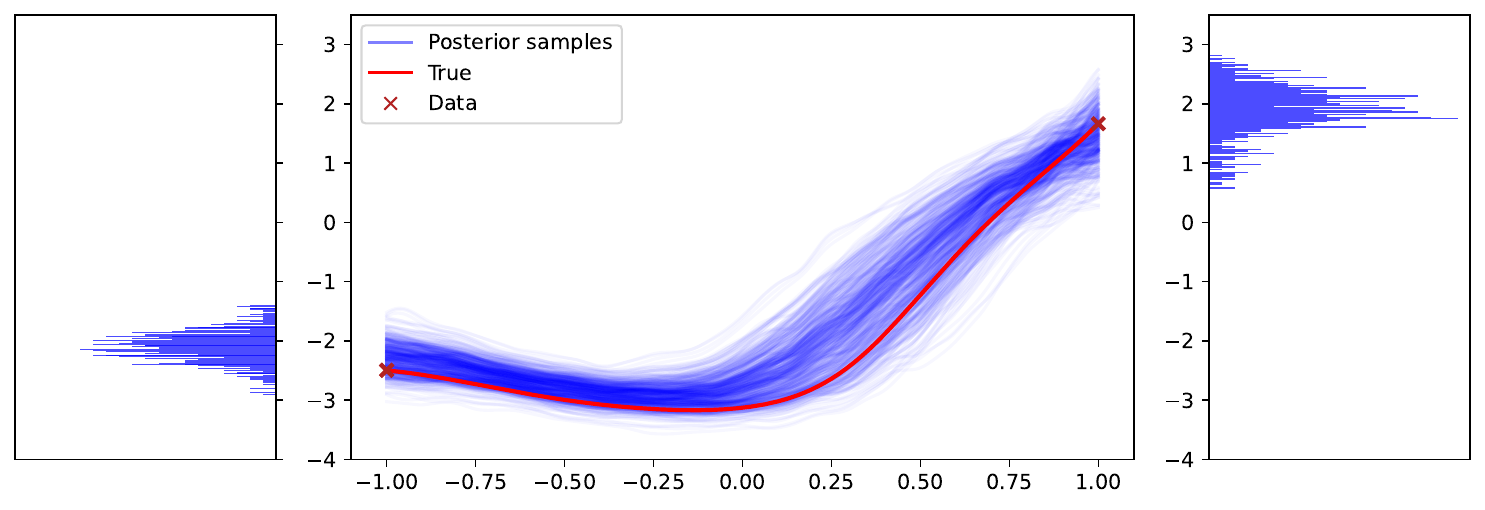}
         \label{fig:bed_1d_optimal_batch}
     \end{subfigure}
     \vfill
         \begin{subfigure}[t]{\textwidth}
         \includegraphics[width=0.49\textwidth]{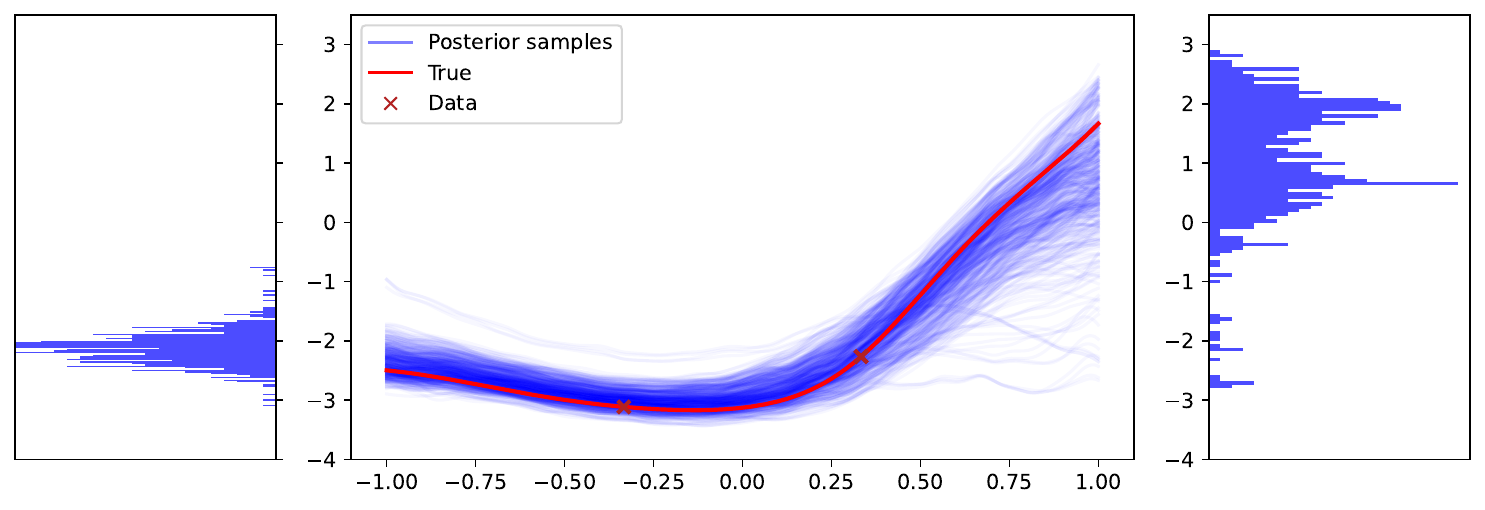} 
        \label{fig:bed_1d_random}
     \end{subfigure}
     \caption{Sensor placement task for the BVP using our Functional Neural Coupling surrogate. Posterior samples based on two optimal design points, adaptive (top) and batch non-adaptive (middle) versus two points from a random sequence (bottom). True values $a=-2$, $b=1.5$.}  
  \label{fig:bed_1d_non_linear_poisson}
  \vspace{-0.2cm}
 \end{figure}

 \subsection{Steady-State Diffusion In 2D}\label{sec:bed_diffusion} 
We consider learning the diffusion coefficient $\kappa$ of the stochastic 2D Darcy flow equation defined by the PDE $ -\nabla\cdot\big(\kappa(\mathbf x)\nabla u(\mathbf x)\big) = f(\mathbf x) + \alpha \omega$ with domain $\mathbf x \in \Omega= [0,1]^2$ and Dirichlet boundary conditions  $u|_{\partial \Omega} = 0$. 
The force term is of the form $f(x) = 0.5$, $\omega$ is space white noise  and the diffusion coefficient $\kappa$ is generated as the push-forward of a Gaussian random field. 

We perform a sensor placement task in which, based on five initial observations (intentionally chosen in non informative places) of a randomly chosen solution, we maximise the estimated utility function to find optimal locations for fifteen additional measurement sites.

Figures \ref{fig:bed_posterior_solutions_darcy} and \ref{fig:bed_posterior_coefficients_darcy} present the inference results of our Functional Neural Coupling surrogate, comparing sensor placement at selected locations using the optimal adaptive method, the batch non-adaptive approach, and points from a Quasi Monte Carlo sequence within the domain. Qualitatively, the results obtained with adaptive sensor placements better match the ground truth both for the solution and the diffusion coefficient.
Table \ref{table_bed} shows that the relative $L^2$ error norms between the ground truth and the posterior mean for the solution and the diffusion coefficient averaged over 50 sensor placement experiments. 
It is important to note that no errors are reported for the inference based on Quasi-Monte Carlo points, as Quasi-Monte Carlo sequences are deterministic. We observe that, across different types of sensor locations, our Functional Neural Coupling surrogate outperforms the FNO surrogate, which was trained with the same data, and achieves performance close to that of the FNO with oracle noise surrogate, which has complete knowledge of the system.

\begin{table*}[t]
\caption{Relative $L^2$ error norm for the posterior mean of the diffusion coefficient and the solution for the sensor placement experiment on the Darcy flow equation. We take the average over 50 sensor placement loops.}
\label{table_bed}
\begin{center}
\footnotesize
\begin{tabular}{cccc} 
\multicolumn{1}{c}{\bf Method} & \multicolumn{1}{c}{\bf Design points} & $\Vert \log\widehat{\kappa}-\log\kappa_{\text{tr}}\Vert^2/\Vert \log\kappa_{\text{tr}}\Vert^2$   &$\Vert \widehat u-u_{\text{tr}}\Vert^2/\Vert u_{\text{tr}}\Vert^2$ 
\\ \hline \\
\multirow{ 3}{*}{Functional Neural Coupling (Ours)} & Adaptive BED & $\mathbf{0.234\pm 0.078}$    & $\mathbf{0.102 \pm 0.083}$ \\
& Batch non-adaptive BED  & $ 0.337\pm 0.091$& $0.192\pm 0.087$\\
& Quasi-Monte Carlo sequence  & $0.711$    & $0.328$\\
 \midrule
\multirow{ 3}{*}{\shortstack{FNO surrogate \\ \citep{li2021fourier}}} & Adaptive BED      &  $0.306\pm 0.130$ &  $0.117\pm 0.106$\\
& Batch non-adaptive BED    & $0.551\pm 0.172$ & $0.220\pm 0.118$\\
& Quasi-Monte Carlo sequence  &  $1.182$& $0.379$\\
\midrule
\midrule
\multirow{3}{*}{\shortstack{FNO w/ oracle noise surrogate \\ \citep{salvi2022neural}}} & Adaptive BED      & $\underline{0.155 \pm 0.101}$ & $\underline{0.093\pm 0.089}$\\
& Batch non-adaptive BED    & $0.291\pm 0.089$ & $0.124\pm 0.110$\\
& Quasi-Monte Carlo sequence  & $0.459$& $0.255$\\

\end{tabular}
\end{center}
\end{table*}

\begin{figure}[t]
  \centering
  \includegraphics[width=0.4\textwidth]{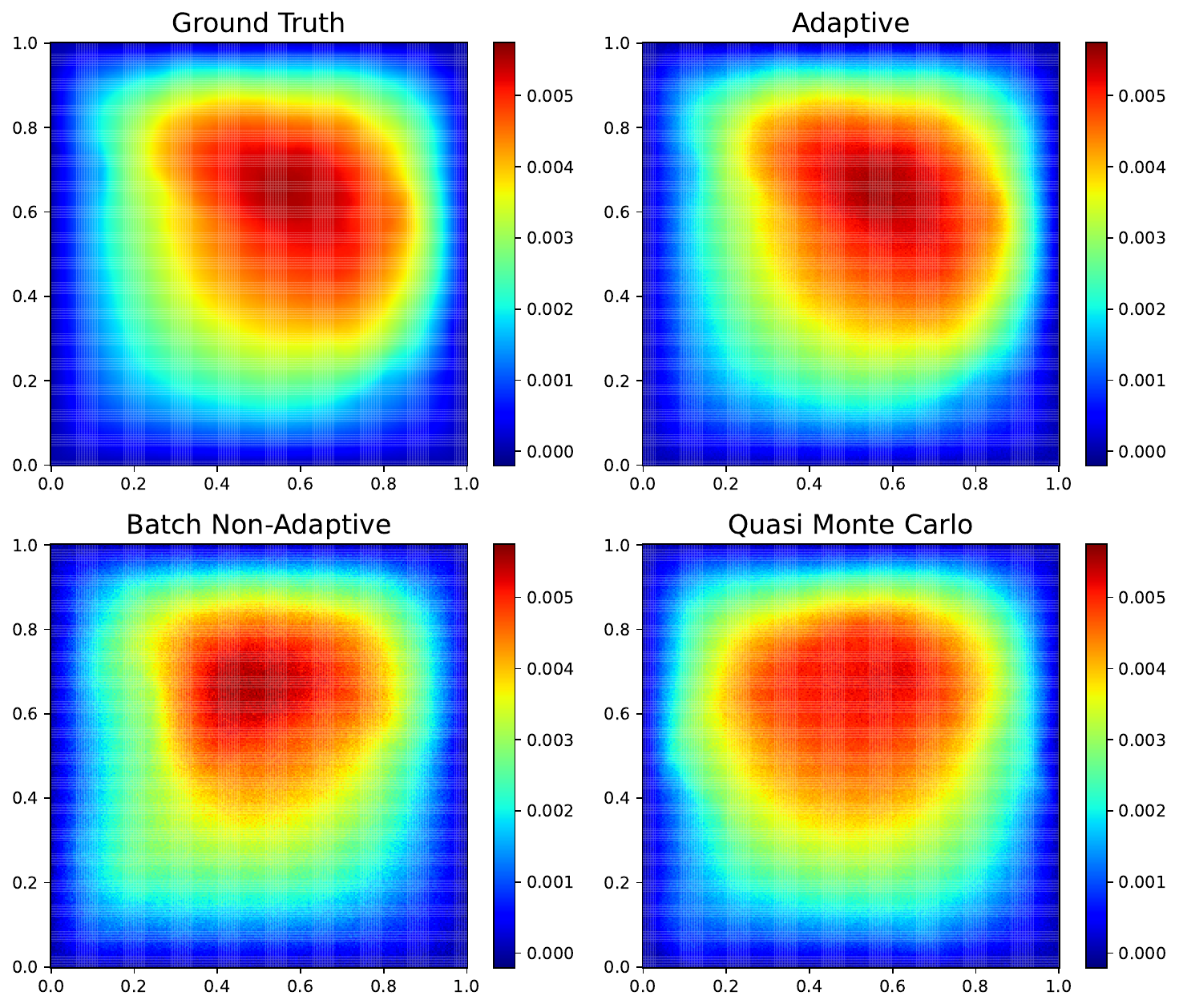}
  \caption{Top left: Observed solution. Posterior mean solutions from observations at adaptively chosen locations (top right), batch non-adaptively selected locations (bottom left) and Quasi Monte Carlo points (bottom right).}
  \label{fig:bed_posterior_solutions_darcy}
  \vspace{-0.2cm}
\end{figure}
\begin{figure}[t]
  \centering
  \includegraphics[width=0.4\textwidth]{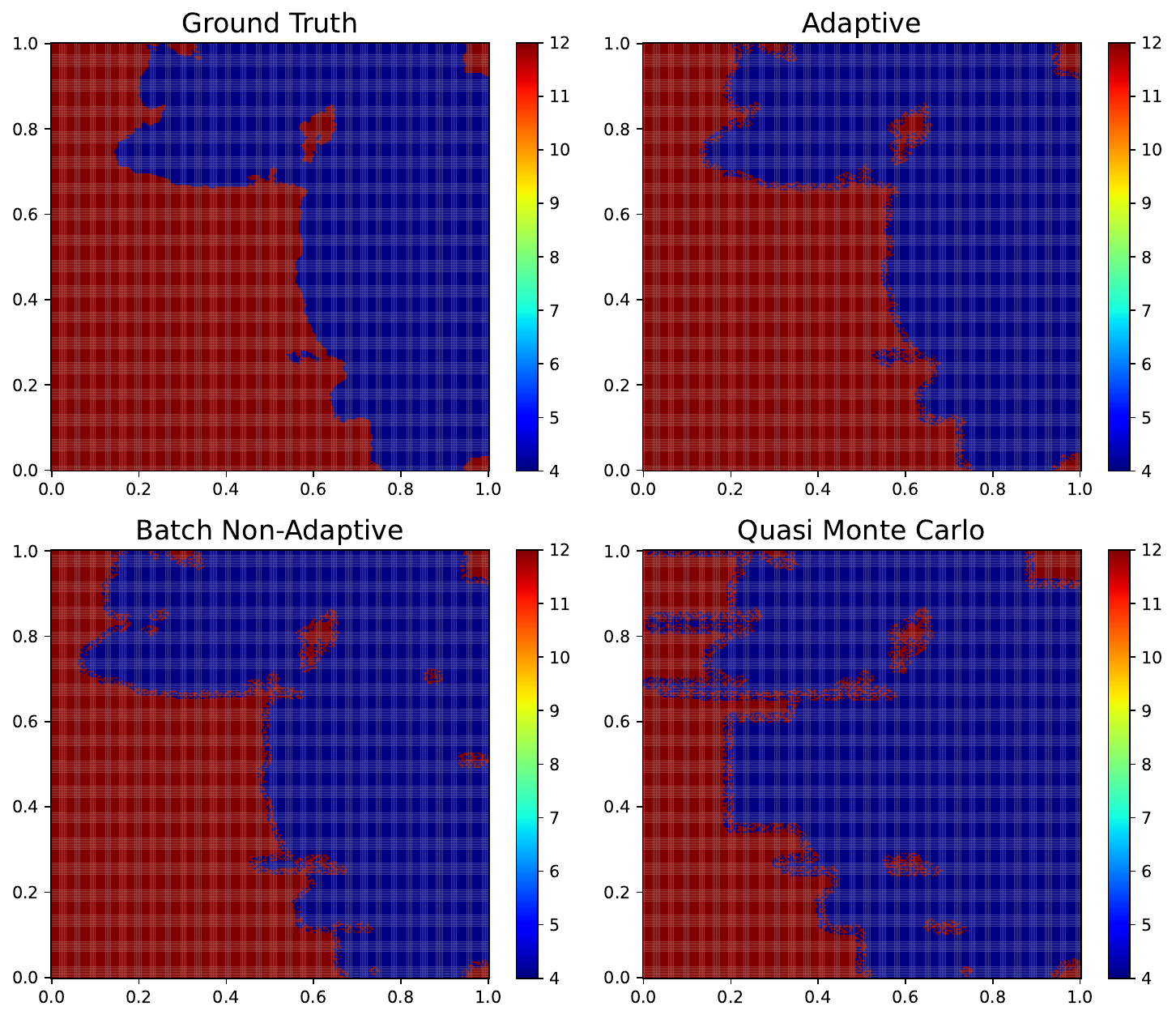}
  \caption{Top left: True diffusion coefficient. Posterior mean coefficients from observations at adaptively chosen locations (top right), batch non-adaptively selected locations (bottom left) and Quasi Monte Carlo points (bottom right).}
  \label{fig:bed_posterior_coefficients_darcy}
  \vspace{-0.2cm}
\end{figure}

\subsection{Navier Stokes Equation}
We study a stochastic version of the 2D Navier Stokes equation for a viscous, incompressible fluid in vorticity form on the unit torus
\begin{gather*}
    \partial_t w(x,t) + u(x,t)\cdot \nabla w(x,t) = \nu \Delta w(x,t) + f(x) + \alpha\varepsilon,\\
    \nabla\cdot u(x,t) = 0,\quad
    w(x,0) = w_0(x),
\end{gather*}
where $x\in(0,1)^2,\ t\in (0, T]$, $u$ is the velocity field, $w=\nabla\times u$ is the vorticity, $\nu\in\mathbb{R}_+$ is the viscosity coefficient, which is set to $10^{-4}$, $f$ is the deterministic forcing function, which takes the form $f(x) = 0.1[\sin(2\pi(x_1+x_2)) + \cos(2\pi(x_1+x_2))]$, and $\varepsilon$ is the stochastic forcing given by $\varepsilon=\Dot{W}$ for $W$ being a Q-Wiener process which is coloured in space. The initial condition $w_0(x)$ is generated according to a Gaussian random field with periodic boundary conditions. 
Following \citet{salvi2022neural}, we solve the previous equation for each realisation of the Q-Wiener process using a pseudo-spectral solver and a time step of size $10^{-3}$. 
The training data is generated by evaluating the solution at time points $t = 1, 2, 3$ on a $16 \times 16$ spatial mesh.
We learn a Functional Neural Coupling between the initial vorticity $\omega_0$ and the vorticity at times $t = 1, 2, 3$.

The sensor placement task  consists in finding optimal locations for fifteen additional measurement sites based on five initial observations of a randomly chosen solution. Inference is performed using observations of the vorticity at $t=1, 2, 3$ in the chosen locations.
As illustrated in Figure \ref{fig:bed_navier_stokes}, inference results using adaptive sensor placement locations closely match the ground truth.
Table \ref{table_bed_navier} shows that our Functional Neural Coupling approach outperforms the FNO surrogate and performs close to the FNO with oracle noise which is unattainable in practice.

\begin{table*}[t]
\caption{Relative $L^2$ errors for the posterior mean of the initial vorticity $w_0$ and the vorticity at $t=1, 2, 3$, $\underline{w}_t$, for the sensor placement experiment on the Navier Stokes equation. We take the average over 20 sensor placement experiments.}
\label{table_bed_navier}
\begin{center}
\footnotesize
\begin{tabular}{cccc} 
\multicolumn{1}{c}{\bf Method} & \multicolumn{1}{c}{\bf Design points} &  $\Vert \widehat{w}_0- w_{\text{tr},0}\Vert^2/\Vert w_{\text{tr},0}\Vert^2$   &$\Vert \widehat{\underline{w}}_{t}-\underline{w}_{\text{tr}, t}\Vert^2/\Vert \underline{w}_{\text{tr}, t} \Vert^2$
\\ \hline \\
\multirow{ 3}{*}{Functional Neural Coupling (Ours)} & Adaptive BED & $\mathbf{0.293\pm0.077}$ & $\mathbf{0.175\pm 0.091}$\\
& Batch non-adaptive BED  & $0.321\pm0.083$ & $0.239\pm0.090$\\
& Quasi-Monte Carlo sequence  &  $ 0.578$& $0.422$\\
 \midrule
\multirow{ 3}{*}{\shortstack{FNO surrogate \\ \citep{li2021fourier}}}  & Adaptive BED      & $0.382\pm0.067$ & $0.242\pm 0.095$\\
& Batch non-adaptive BED    & $0.454\pm0.092$ & $0.288\pm 0.089$\\
& Quasi-Monte Carlo sequence  & $0.652$ & $0.576$ \\
\midrule
\midrule
\multirow{3}{*}{\shortstack{FNO w/ oracle noise surrogate \\ \citep{salvi2022neural}}} & Adaptive BED      & $\underline{0.221\pm0.065}$  & $\underline{0.103\pm0.079}$\\
& Batch non-adaptive BED    & $0.301\pm0.080$ & $0.169\pm0.083$\\
& Quasi-Monte Carlo sequence  &$0.454$ & $0.332$\\
\end{tabular}
\end{center}
\end{table*}

\begin{figure}[t]
  \centering
     \begin{subfigure}[t]{\textwidth}
         \includegraphics[width=0.49\textwidth]{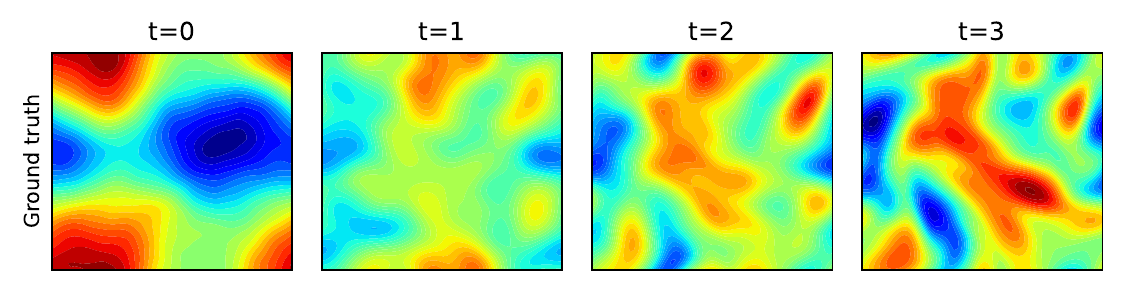}
         \label{fig:bed_ns_ground_truth}
     \end{subfigure}
     \vfill
          \begin{subfigure}[t]{\textwidth}
         \includegraphics[width=0.49\textwidth]{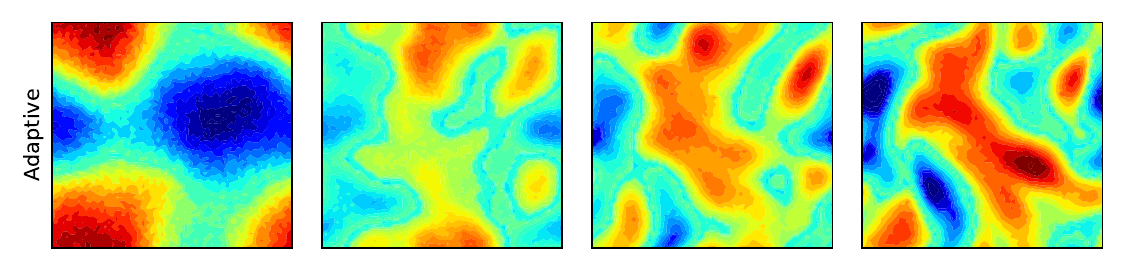}
         \label{fig:bed_ns_adaptive}
     \end{subfigure}
     \caption{Top: ground truth solution. Bottom: posterior mean based on observations at adaptively chosen sensor locations using our Functional Neural Coupling surrogate.}  
  \label{fig:bed_navier_stokes}
  \vspace{-0.2cm}
 \end{figure}

\section{DISCUSSION AND LIMITATIONS}
\label{sec:discussion_limitations} 
In this paper we introduce Functional Neural Couplings, a novel approach  to learning resolution-invariant surrogate models for stochastic functional operators by modelling the joint distribution of operator input and output with an energy-based model without assuming any knowledge about the intrinsic driving noise.
The combination of probabilistic generative models and the implicit neural representation unlocks a unique set of properties of our method: it is resolution independent (meaning that even if trained on a lower resolution it can be directly evaluated on a higher resolution), it captures the effects of  stochasticity present in the forward model (as is the case for SPDEs), and it outputs an unnormalised joint distribution which allows inference and downstream tasks such as sensor placement through sampling from the posterior distribution. 
To the best of our knowledge, our work is the first that makes use of generative models in sensor placement of inverse problems for SPDEs avoiding costly traditional MCMC-based methods and providing a tractable likelihood model.
The numerical experiments demonstrate that our approach outperforms the FNO surrogate in the sensor placement task, as the latter are unsuitable for stochastic forward problems.
While our approach does not require full knowledge of the driving noise, it is still able to achieve performance comparable to the FNO having full access to the driving noise, thus yielding an accurate and practically useful surrogate. 
We also show improved accuracy when performing inference with a principled sensor placement strategy using BED tools, compared to using points from a Quasi-Monte Carlo sequence. In both cases, our Functional Neural Coupling serves as a surrogate for the joint distribution.

While Functional Neural Couplings are a promising method for probabilistic surrogate modelling of expensive stochastic systems, they make the assumption that the density of parameter-solution pairs is positive everywhere. This may pose challenges to our method when the data distribution is sparsely distributed on $\mathcal A\times \mathcal U$, for example, in fully deterministic settings or when only few parameter choices lead to stable behaviour of the system. 
In particular, the training of the energy-based model may fail when the implicit neural representation is chosen too high-dimensional, thus we may need to sacrifice the accuracy with which the functions are described through the finite dimensional latent variables to allow for stable training. 
In addition, the energy based model will not necessarily  generalise to parts of the state space not covered by the prior from which $a\in \mathcal A$ was sampled. Thus, the specific training data needs to be chosen judiciously. Alternatively, an adaptive data-generation approach as proposed in \citet{papamakarios2019sequential} might be required, particularly when the simulator is expensive.

For future work, we are interested in exploring improved sample efficient methods for the modelling and training of energy-based models for functional data and studying sequential strategies that use the observation data more effectively for fine-tuning the base energy-based model.

\subsubsection*{Acknowledgements}
PCE is supported by EPSRC through the Modern Statistics and Statistical Machine Learning (StatML) CDT programme, grant no. EP/S023151/1. TS is supported by the EPSRC-DTP scholarship partially funded by the Department of Mathematics, Imperial College London. 
We thank the anonymous reviewers for their comments.

\bibliographystyle{plainnat}
\bibliography{refs}

\section*{Checklist}

 \begin{enumerate}

 \item For all models and algorithms presented, check if you include:
 \begin{enumerate}
   \item A clear description of the mathematical setting, assumptions, algorithm, and/or model. [Yes]
   \item An analysis of the properties and complexity (time, space, sample size) of any algorithm. [Yes] Further details are provided in the Appendix.
   \item (Optional) Anonymized source code, with specification of all dependencies, including external libraries. [Yes] It is provided in the supplementary material.
 \end{enumerate}

 \item For any theoretical claim, check if you include:
 \begin{enumerate}
   \item Statements of the full set of assumptions of all theoretical results. [Yes] We only include one theoretical result, Proposition 1 (in the Appendix). The proof is provided and the statement of the theorem includes all the necessary assumptions.
   \item Complete proofs of all theoretical results. [Yes]
   \item Clear explanations of any assumptions. [Yes]     
 \end{enumerate}

 \item For all figures and tables that present empirical results, check if you include:
 \begin{enumerate}
   \item The code, data, and instructions needed to reproduce the main experimental results (either in the supplemental material or as a URL). [Yes] It is provided in the supplementary material.
   \item All the training details (e.g., data splits, hyperparameters, how they were chosen). [Yes] All the details are provided in the Appendix.
         \item A clear definition of the specific measure or statistics and error bars (e.g., with respect to the random seed after running experiments multiple times). [Yes]
         \item A description of the computing infrastructure used. (e.g., type of GPUs, internal cluster, or cloud provider). [Yes]  The type of computational resources used to run the experiments is specified in the Appendix.
 \end{enumerate}

 \item If you are using existing assets (e.g., code, data, models) or curating/releasing new assets, check if you include:
 \begin{enumerate}
   \item Citations of the creator If your work uses existing assets. [Yes]  In particular, we reuse code (available on GitHub) from previous work, which is appropriately cited where necessary.
   \item The license information of the assets, if applicable. [Yes]
   \item New assets either in the supplemental material or as a URL, if applicable. [Yes]
   \item Information about consent from data providers/curators. [Not Applicable]
   \item Discussion of sensible content if applicable, e.g., personally identifiable information or offensive content. [Not Applicable]
 \end{enumerate}

 \item If you used crowdsourcing or conducted research with human subjects, check if you include:
 \begin{enumerate}
   \item The full text of instructions given to participants and screenshots. [Not Applicable]
   \item Descriptions of potential participant risks, with links to Institutional Review Board (IRB) approvals if applicable. [Not Applicable]
   \item The estimated hourly wage paid to participants and the total amount spent on participant compensation. [Not Applicable]
 \end{enumerate}

 \end{enumerate}

 \newpage

\appendix
\onecolumn

\section{Methodology: Further Details}
\subsection{Implicit Neural Representations}
Since we are using the latent representations to learn a surrogate model for the operator $\mathcal G$, similar functions should map to similar latent representations. This is guaranteed via the following proposition.
\begin{propositionappendix}
Let $g_{\psi}(\cdot, \cdot):\Omega\times\mathbb{R}^{d_z}\to\mathbb{R}$ be an implicit neural representation. 
If the activation functions of the neural network are continuous and differentiable almost everywhere, then the function $g_{\psi}$ is continuous and differentiable almost everywhere in both variables.
In addition, if the domain $\Omega\subseteq\mathbb{R}^{d_x}$ is bounded and the neural network activation functions are Lipschitz, then $g_{\psi}$ is Lipschitz with respect to the second variable $\mathbf{z}$, i.e. there exists a constant $L$ such that for all $\mathbf{z}, \mathbf{z}'\in \mathbb{R}^{d_z}$
    \begin{equation*}
        \Vert g_{\psi}(\cdot, \mathbf{z})-  g_{\psi}(\cdot, \mathbf{z}')\Vert_{L_2}\leq L \Vert \mathbf z - \mathbf z'\Vert_{\mathbb{R}^{d_z}}.
    \end{equation*}
\end{propositionappendix}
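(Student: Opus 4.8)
The plan is to exploit the explicit layered structure of the implicit neural representation $g_\psi$. Recall that $g_\psi(\mathbf{x}, \mathbf{z})$ is a composition of SIREN layers of the form $\mathbf{h} \mapsto \sin(\omega_0(W\mathbf{h} + \mathbf{b} + \boldsymbol\beta(\mathbf{z})))$, where the shift $\boldsymbol\beta$ depends differentiably on the latent code $\mathbf{z}$ and the weights $W$, biases $\mathbf{b}$ are fixed. So $g_\psi$ is a finite composition of affine maps and coordinatewise activation functions, with the dependence on $\mathbf{z}$ entering only through the modulation maps $\mathbf{z} \mapsto \boldsymbol\beta(\mathbf{z})$, which are themselves smooth (indeed affine or shallow-network) maps into the shift space. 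The first claim — joint continuity and a.e.\ differentiability in both variables — then follows immediately from the chain rule and the fact that continuity is preserved under composition, while a.e.\ differentiability is preserved under composition with Lipschitz maps (to be careful here one notes that the affine maps are smooth, so a.e.\ differentiability of the activations propagates through the composition).

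For the Lipschitz bound in $\mathbf{z}$, the key step is to establish a pointwise (in $\mathbf{x}$) Lipschitz estimate with a constant that is \emph{uniform over} $\mathbf{x} \in \Omega$, and then integrate. Concretely, I would argue layer by layer: if $\sigma$ is the activation with Lipschitz constant $L_\sigma$, then at each layer the map $\mathbf{z} \mapsto \mathbf{h}^{(k)}(\mathbf{x}, \mathbf{z})$ (the $k$-th hidden activation) satisfies, for fixed $\mathbf{x}$,
\begin{equation*}
\|\mathbf{h}^{(k)}(\mathbf{x}, \mathbf{z}) - \mathbf{h}^{(k)}(\mathbf{x}, \mathbf{z}')\| \le L_\sigma\|W^{(k)}\|_{\mathrm{op}}\,\|\mathbf{h}^{(k-1)}(\mathbf{x}, \mathbf{z}) - \mathbf{h}^{(k-1)}(\mathbf{x}, \mathbf{z}')\| + L_\sigma\|W^{(k)}\|_{\mathrm{op}} L_{\beta^{(k)}}\|\mathbf{z}-\mathbf{z}'\|,
\end{equation*}
where $L_{\beta^{(k)}}$ is the Lipschitz constant of the $k$-th modulation map. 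Unrolling this recursion over the finitely many layers gives a global constant $L$ depending only on the network weights, the activation Lipschitz constant, and the modulation-map Lipschitz constants — crucially, \emph{independent of} $\mathbf{x}$, because the input $\mathbf{x}$ only seeds $\mathbf{h}^{(0)}$ and does not enter the difference bound (the $\mathbf{h}^{(0)}$ term cancels since both evaluations use the same $\mathbf{x}$). With the uniform pointwise bound $|g_\psi(\mathbf{x},\mathbf{z}) - g_\psi(\mathbf{x},\mathbf{z}')| \le L\|\mathbf{z}-\mathbf{z}'\|$ in hand, boundedness of $\Omega$ gives
\begin{equation*}
\|g_\psi(\cdot,\mathbf{z}) - g_\psi(\cdot,\mathbf{z}')\|_{L^2(\Omega)}^2 = \int_\Omega |g_\psi(\mathbf{x},\mathbf{z}) - g_\psi(\mathbf{x},\mathbf{z}')|^2\,d\mathbf{x} \le L^2 |\Omega|\,\|\mathbf{z}-\mathbf{z}'\|^2,
\end{equation*}
so the $L^2$ Lipschitz constant is $L\sqrt{|\Omega|}$ (or one can absorb $\sqrt{|\Omega|}$ into $L$).

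I expect the main obstacle to be stating the assumptions on the modulation maps $\mathbf{z} \mapsto \boldsymbol\beta(\mathbf{z})$ cleanly enough that the recursion closes: the proposition as phrased only hypothesises Lipschitz activations, so one must either observe that in the COIN++ construction the modulations enter affinely (hence trivially Lipschitz on all of $\mathbb{R}^{d_z}$), or fold a Lipschitz assumption on the modulation network into the statement. A secondary subtlety is the a.e.\ differentiability claim: composition does not in general preserve a.e.\ differentiability, but here it does because every layer is the composition of a \emph{smooth} affine map with an a.e.-differentiable activation, and one can invoke that $f \circ A$ is a.e.\ differentiable when $f$ is a.e.\ differentiable and $A$ is affine and non-degenerate (or simply that SIREN activations $\sin$ are in fact everywhere smooth, making this point moot for the architecture actually used). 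Everything else is a routine unrolling of the composition.
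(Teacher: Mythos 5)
Your proposal is correct and follows essentially the same route as the paper's proof: both argue that each SIREN layer is a composition of affine maps, Lipschitz activations, and a Lipschitz modulation hypernetwork, yielding a pointwise Lipschitz bound in $\mathbf{z}$ that is uniform over $\mathbf{x}$, and then use boundedness of $\Omega$ to pass to the $L^2$ norm. The paper handles your flagged subtlety about the modulation maps exactly as you anticipate, by noting that the hypernetwork $\boldsymbol\beta(\mathbf{z})$ uses ReLU (and sine) activations, which are Lipschitz with constant $1$; your version simply makes the layer recursion and the final integration step more explicit.
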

\begin{proof}
Recall that $g_\psi$ is composed of layers of the form $\sin(w_0(W\mathbf{h} + \mathbf{b} + \mathbf{\beta}))$, where $\beta = \beta(\mathbf{z})$ is a hypernetwork itself with continuous and differentiable almost everywhere activation functions, in particular in our implementation we use ReLU activations which satisfy this condition. Therefore, it immediately follows that $g_\psi$ is continuous and differentiable almost everywhere in both variables.

On the other hand, we have that linear functions are Lipschitz and if the activation functions are also Lipschitz by composition we have that $g_\psi(\mathbf{x},\cdot)$ is Lipschitz for every $\mathbf x\in\Omega$, i.e, there exists a constant $\Tilde{L}$ such that for every $\mathbf x\in\Omega$
    \begin{equation*}
        \Vert g_{\psi}(\mathbf x, \mathbf{z})-  g_{\psi}(\mathbf x, \mathbf{z}')\Vert_\mathbb{R}\leq \Tilde{L} \Vert \mathbf z - \mathbf z'\Vert_{\mathbb{R}^{d_z}}.
    \end{equation*}
Putting this together with the fact that the domain $\Omega\in\mathbb{R}^{d_x}$ is bounded, it follows that $g_\psi$ is Lipschitz with respect to the second variable $\mathbf{z}$.
In particular, in our case we use sine and ReLU activation functions which are Lipschitz with constant 1.
\end{proof}
\subsection{Energy-Based Model Training}
Energy-Based Models are a class of parametric unnormalised probabilistic models of the form $\exp(-E_\theta)$ originally inspired by statistical physics.
Despite their flexibility, energy-based models have not seen widespread adoption in machine learning applications due to the challenges involved in training them.
In particular, because the normalising constant is intractable, these models cannot be optimised using maximum likelihood estimation. 
Alternative training methods have been explore such as contrastive divergence \citep{Hinton06}, score-based methods \citep{JMLR:v6:hyvarinen05a, vincent_pascal, song2020improved} and energy discrepancy \citep{schroder2023energy}.

Contrastive divergence \citep{Hinton06} is a method that approximates the gradient of the log-likelihood via short runs of a Markov Chain Monte Carlo (MCMC) process. 
Although using short MCMC runs greatly reduces both the computation per gradient step and the variance of the estimated gradient \citep{pmlr-vR5-carreira-perpinan05a}, it comes at the cost of producing poor approximations of the energy function. 
This issue arises, in part, because contrastive divergence is not the gradient of any objective function \citep{pmlr-v9-sutskever10a, 10.1162/neco.2008.11-07-647} which significantly limits the theoretical understanding of its convergence.

Score-based methods provide an alternative way for training based on minimising the expected squared distance of the score function of the true distribution $\nabla_x \log p_{\text{data}}$ and the score function given by the model $\nabla_x \log p_\theta$, which by definition are independent of the normalising constant \citep{JMLR:v6:hyvarinen05a, vincent_pascal}. 
However, these methods only use gradient information and are therefore short-sighted \citep{yang_song_gradients_data_distribution} as they do not resolve the global characteristics of the distribution when limited data are available.

Energy discrepancy attempts to solve the problems of the two previous methods by proposing a new loss function that compares the data distribution and the energy-based model \citep{schroder2023energy}. This loss is given by
\begin{equation*}
    \mathcal{L}_{t, M, w}(\theta) :=\frac{1}{N}\sum_{i=1}^N \log\left(\frac{w}{M} + \frac{1}{M}\sum_{j=1}^M\exp\left(E_\theta (\mathbf{z}_i) - E_\theta(\mathbf z_{i} + \sqrt{t}\xib_i + \sqrt{t}\xib_{i,j}')\right)\right),
\end{equation*}
where $\xib_i, \xib_{i,j}'\sim \mathcal{N}(0,\mathbf I_{{d_{z_a} + d_{z_u}}})$ are i.i.d. random variables and $t, M, w$ are tunable hyper-parameters. 
Energy discrepancy depends only on the energy function and is independent of the scores and MCMC samples from the energy-based model. 

We applied these three methods to train the energy-based model in our framework, obtaining the best results with the latter. Notably, for the 2D examples, both contrastive divergence and score matching failed to converge.

\subsection{Optimal Bayesian Experimental Design For Sensor Placement}\label{sec:sensor_placement_appendix}
The sensor placement task consists in optimally selecting $\xib$ to maximise the information gain about the parameter and solution of the stochastic PDE. 
In our proposed framework, parameter and solution are approximated by $g_{\psi}(\cdot,\mathbf{z}_{\kappa})$ and $f_{\phi}(\cdot,\mathbf{z}_{u})$, respectively, where $\mathbf{z}_\kappa\in\mathbb{R}^{d_{z_\kappa}}$ and $\mathbf{z}_u\in\mathbb{R}^{d_{z_u}}$ are the associated latent embeddings. 
Mathematically, the utility function for $\xib$ needs to maximise the expected information gain over the prior $p_\theta(\mathbf{z}_\kappa, \mathbf{z}_u)$, as measured by relative entropy. This is equivalent to maximising the expected KL-divergence
\begin{align*}
    U(\xib) &= \mathbb{E}_{p(\mathbf{y}|\xib)}\big[\KL\big({p(\mathbf{z}_\kappa, \mathbf{z}_u|\xib, \mathbf{y})}\,\Vert \,{p_\theta(\mathbf{z}_\kappa, \mathbf{z}_u)}\big)\big] \\
    &= \int d(\mathbf{z}_\kappa, \mathbf{z}_u) \int d\mathbf{y} \big(\log p(\mathbf{z}_\kappa, \mathbf{z}_u|\xib, \mathbf{y}) - \log p_\theta(\mathbf{z}_\kappa, \mathbf{z}_u)\big) p(\mathbf{z}_\kappa, \mathbf{z}_u, \boldsymbol{y}|\xib),
\end{align*}
where the expectation is computed over the predictive distribution of the new (yet unobserved) data $p(\mathbf{y}|\xib)$. 
Applying Bayes theorem, we rewrite the above expression in a form amenable to estimation
\begin{align}\label{eq:utility}
    U(\xib)&=\int d(\mathbf{z}_\kappa, \mathbf{z}_u) \int d\mathbf{y}\left(\log \frac{p(\mathbf{y}|\mathbf{z}_\kappa, \mathbf{z}_u, \xib)p_\theta(\mathbf{z}_\kappa, \mathbf{z}_u)}{p(\mathbf{y}|\xib)} - \log p_\theta(\mathbf{z}_\kappa, \mathbf{z}_u)\right) p(\mathbf{z}_\kappa, \mathbf{z}_u, \boldsymbol{y}|\mathbf{d})\nonumber\\
    &= \int d(\mathbf{z}_\kappa, \mathbf{z}_u) \int d\mathbf{y} \big(\log p(\mathbf{y}|\mathbf{z}_\kappa, \mathbf{z}_u, \xib) - \log p(\mathbf{y}|\xib)\big) p(\mathbf{z}_\kappa, \mathbf{z}_u, \mathbf{y}|\xib)\nonumber\\
    &= \mathbb{E}_{p(\mathbf{z}_\kappa, \mathbf{z}_u, \mathbf{y}|\xib)}\log p(\mathbf{y}|\mathbf{z}_\kappa, \mathbf{z}_u, \xib) - \mathbb{E}_{p(\mathbf{y}|\xib)} \log p(\mathbf{y}|\xib).
\end{align}
\label{ap:boed_ace}

Finding the optimal sensor location $\xib$ is very challenging since the density $p(\mathbf{y}| \xib)$ in Eq. (\ref{eq:utility}) is intractable. A naive Monte Carlo approach will require the use of a nested Monte Carlo estimator which results in high variance and converges relatively slow. 
Moreover, the utility $U(\xib)$ must be computed separately for each location $\xib$, which makes the framework highly inefficient especially in high-dimensional settings, as $U(\xib)$ is fed into an outer optimisation loop to select the optimal sensor position.

To circumvent these inefficiencies amortised variational approaches have been proposed in the literature \citep{Foster2019VariationalBO, pmlr-v108-foster20a, Foster2021DeepAD, kennamer_variational_bed}. By constructing a variational lower bound to $U(\xib)$, we can obtain a unified framework that can be simultaneously optimised with respect to both the variational $\varphi$ and position parameters $\xib$ using stochastic gradient ascent \citep{robbings_monro_1951}. \citet{pmlr-v108-foster20a} proposed the adaptive contrastive estimation (ACE) bound given by
\begin{equation*}
    \Hat U_{\text{ACE}}(\xib, \varphi) = \mathbb{E}\left[\log \frac{p(\mathbf y|\mathbf z_{\kappa, 0}, \mathbf z_{u, 0}, \xib)}{\frac{1}{L+1}\sum_{l=0}^L\frac{p_\theta(\mathbf z_{\kappa,l}, \mathbf z_{u,l}) p(\mathbf y|\mathbf z_{\kappa,l}, \mathbf z_{u,l},\xib)}{q_\varphi(\mathbf z_{\kappa,l}, \mathbf z_{u,l}|\mathbf y)}}\right],
\end{equation*}
where $q_\varphi(\mathbf z_{\kappa,0}, \mathbf{z}_{u,0}|\mathbf y)$ is the inference network which takes as input $\varphi$, $\mathbf y$ and outputs a distribution over $(\mathbf z_{\kappa}, \mathbf{z}_{u})$ and the expectation is taken with respect to $p_\theta(\mathbf z_{\kappa,0}, \mathbf z_{u,0})p(\mathbf y|\mathbf z_{\kappa,0}, \mathbf z_{u,0},\xib)q_\varphi(\mathbf z_{\kappa,1:L}, \mathbf z_{u,1:L}|\mathbf y)$. \citet{pmlr-v108-foster20a} prove that $\Hat U_{\text{ACE}}(\xib, \varphi)\leq U(\xib)$. By replacing the inference network $q_\varphi(\mathbf z_{\kappa}, \mathbf z_{u}|\mathbf y)$ with the prior $p_\theta(\mathbf z_{a}, \mathbf z_{u})$ to generate contrastive samples, we obtain the prior contrastive estimation (PCE) bound
\begin{align*}
    \Hat U_{\text{PCE}}(\xib) & = \mathbb{E}\left[\log \frac{p(\mathbf y|\mathbf z_{a, 0}, \mathbf z_{u, 0}, \mathbf \xib)}{\frac{1}{L+1}\sum_{l=0}^L  p(\mathbf y|\mathbf z_{\kappa,l}, \mathbf z_{u,l},\xib)}\right]\\
    &=  - \mathbb{E}\left[\log \left(1+\sum_{l=1}^L \frac{p(\mathbf y|\mathbf z_{\kappa,l}, \mathbf z_{u,l},\xib)}{p(\mathbf y|\mathbf z_{\kappa,0}, \mathbf z_{u,0},\xib)}\right)\right] + C,
\end{align*}
where the expectation is over $p_\theta(\mathbf z_{\kappa,0}, \mathbf z_{u,0})p(\mathbf y|\mathbf z_{\kappa,0}, \mathbf z_{u,0},\xib)p_\theta(\mathbf z_{\kappa,1:L}, \mathbf z_{u,1:L})$.
This alternative estimator significantly speeds up the training since no variational parameters need to be learnt.

To optimise the PCE bound with respect to $\xib$ we need an unbiased gradient estimator of $\partial \Hat U_{\text{PCE}}/\partial \xib$.
This can be achieved by using a reparametrisation trick, where we introduce a random variable $\varepsilon$ independent of $\xib$ together with a representation of $\mathbf y$ as a deterministic function of $\varepsilon$, that is, $\mathbf y=\mathbf y(\mathbf z_{\kappa,0}, \mathbf z_{u,0}, \xib, \varepsilon)$. Recalling that $\mathbf{y}_i\sim\mathcal{N}(f_{\phi}(\xib,\mathbf{z}_{u_i}), \sigma^2)$ we can write $ \mathbf y = f_\phi(\xib, \mathbf z_{u, 0}) + \sigma \varepsilon$, where $\varepsilon\sim \mathcal N(0,1)$. This allows us to derive the following reparametrised gradient 
\begin{equation*}
    \frac{\partial \Hat U_{\text{PCE}}}{\partial \xib} = \mathbb{E}\left[\frac{\partial g}{\partial \xib} + \frac{\partial g}{\partial \mathbf y}\frac{\partial \mathbf y}{\partial \xib} \right],
\end{equation*}
where the expectation is over $p_\theta(\mathbf z_{\kappa,0}, \mathbf z_{u,0}) p(\varepsilon)p_\theta(\mathbf z_{\kappa,1:L}, \mathbf z_{u,1:L})$ and 
\begin{equation*}
    g(\mathbf y,\mathbf z_{\kappa, 0:L}, \mathbf z_{u, 0:L}, \xib) = - \log \left(1 + \sum_{l=1}^L\frac{p(\mathbf y|\mathbf z_{\kappa,l}, \mathbf z_{u,l},\xib)}{p(\mathbf y|\mathbf z_{\kappa,0}, \mathbf z_{u,0},\xib)}\right).
\end{equation*}
A Monte Carlo approximation of this expectation leads to a much lower variance estimator for the true $\xib$-gradient.
Note that $\mathbb{E}[\partial g/ \partial \xib]$ can be efficiently computed in \texttt{PyTorch} as $\partial \mathbb{E}[g]/\partial \xib$ by detaching $\mathbf{y}$ from the computational graph. On the other hand,
\begin{equation*}
    \frac{\partial g}{\partial \mathbf y} = - \frac{1}{\exp(-g)}\sum_{l=1}^L \frac{p(\mathbf y|\mathbf z_{\kappa,l}, \mathbf z_{u,l},\xib)}{p(\mathbf y|\mathbf z_{\kappa,0}, \mathbf z_{u,0},\xib)} \frac{1}{\sigma^2}\left( f_\phi(\xib, \mathbf{z}_{u,l}) - f_\phi(\xib, \mathbf{z}_{u,0})\right),
\end{equation*}
which can be evaluated using the numerically stabilised logsumexp function,
and 
\begin{equation*}
    \frac{\partial \mathbf y}{\partial \xib} = \frac{\partial f_\phi (\xib, \mathbf z_{u,0})}{\partial \xib}.
\end{equation*}

\section{Numerical Experiment: Stochastic Lotka-Volterra Model}
Another active field of research, where our framework can bring significant benefits is inference on models with intractable likelihoods, such as dynamical systems described by stochastic PDEs. The intractability of the density in such systems arises from the need to marginalise the transition probabilities over all possible trajectories of the stochastic system. 
To illustrate this, take the general It\^{o} SDE
\begin{equation*}
    d X_t = a(X_t)dt + dW_t,\;\; X_0 = x_0, \;\; 0\leq t\leq T,
\end{equation*}
where $W_t$ is a Wiener process and $a:\mathbb{R}\to\mathbb{R}$ is a sufficiently regular non-linear drift coefficient. Given $N$ sparse noisy observations of a trajectory, $\{y_i = X_{t_i} + \eta_i\}_{i=1}^N$, where $\eta_i$ are i.i.d. $\mathcal{N}(0, \sigma^2)$. 
Then, the joint distribution over observations and latent realisations is given by
\begin{align*}
    p(y_1, \dots, y_N, X_{t_1},\dots, X_{t_N}) &= p(X_{t_1},\dots, X_{t_N}) \prod_{i=1}^N p(y_i|X_{t_i})  = p(X_{t_1})\prod_{j=1}^{N-1} p(X_{t_{j+1}}|X_{t_{j}}) \prod_{i=1}^N p(y_i|X_{t_i}),
\end{align*}
where for the second equality we have used that the process $\{W_t\}_{t\geq 0}$ has independent increments. We have that $y_i|X_{t_i}\sim \mathcal{N}(X_{t_i}, \sigma^2)$, however in general the distribution of $X_{t_{i+1}}|X_{t_{i}}$ cannot be expressed analytically.
If the observations occur frequently, so that the time between observations $t_i$ and $t_{i+1}$ is small, then  we can approximate the distribution of $X_{t + \Delta t}|X_{t}$ using a Gaussian approximation,
\begin{equation*}
    X_{t + \Delta t}|X_{t}\sim \mathcal{N}(X_t +a(X_{t})\Delta t, \Delta t).
\end{equation*}
The validity of this approximation breaks down as the step size $\Delta t$ increases, necessitating alternative approaches in scenarios where the observations are infrequent.
Our Functional Neural Couplings method can be used to obtain the distribution over possible trajectories  $\{X_t\}_{0\leq t\leq T}$ from a dataset consisting of point observations of different trajectories, without needing to evaluate each trajectory of the training set at the same time steps. 

To provide a specific example, consider a stochastic version of the Lotka-Volterra model, which describes the joint temporal dynamics of two coexisting populations, predator, $X_{1,t}$, and prey, $X_{2,t}$,
{\footnotesize
\begin{align}\label{eq:lotka_volterra_model}
    dX_{1,t} =& (\theta_1 X_{1,t}-\theta_2 X_{1,t}X_{2,t})dt + \sqrt{\theta_1 X_{1,t}}dW_{t}^{(1)}- \sqrt{\theta_2 X_{1,t}X_{2,t}}dW_{t}^{(2)},\nonumber\\
    dX_{2,t} =& -(\theta_3 X_{2,t}-\theta_2 X_{1,t}X_{2,t})dt - \sqrt{\theta_3 X_{2,t}}dW_{t}^{(3)} + \sqrt{\theta_2 X_{1,t}X_{2,t}}dW_{t}^{(2)},
\end{align}}where $0\leq t\leq 1$, $X_{1, 0} = a$, $X_{2, 0} = b$, $W_t$ is a Wiener process and $\theta = [\theta_1, \theta_2, \theta_3]$ are the model parameters, which for our experiment we keep fixed at $\theta=[5, 0.035, 6]$. 
We train our functional neural coupling surrogate to learn a probability distribution over possible trajectories $\{X_{1,t}, X_{2,t}\}_{0\leq t\leq T}$ from a dataset consisting of point observations of simulated trajectories obtained using an Euler-Maruyama scheme with a sufficiently small time step $\Delta t = 0.01$ to ensure numerical stability.

Given the inference data, $\mathcal{D} = \{(y_{1,t_i}, y_{2,t_i})\}_{i=1}^M$, consisting on $M$ noisy observations of a trajectory, i.e. $y_{j,t_i} = X_{j, t_i} + \eta_i$ where $\eta_i\sim \mathcal{N}(0,\sigma^2)$ and $\sigma=0.2$. The posterior distribution over the latent representation $(\mathbf{z_1}, \mathbf{z_2})$ is 
\begin{align*}
    p(\mathbf{z_1}, \mathbf{z_2}| \mathcal{D}) &\propto \prod_{i = 1}^M p(y_{1,t_i}, y_{2,t_i}|g_\psi(t_i, \mathbf{z_1}), f_\phi(t_i, \mathbf{z_2}))p_{\theta}(\mathbf{z_1}, \mathbf{z_2}) \\
    & = \prod_{i = 1}^M \mathcal{N}(y_{1,t_i}|g_\psi(t_i, \mathbf{z_1}), \sigma^2) \mathcal{N}(y_{2,t_i}|f_\phi(t_i, \mathbf{z_2}), \sigma^2) p_{\theta}(\mathbf{z_1}, \mathbf{z_2}),
\end{align*}
where $g_\psi, f_\phi$ denote the INRs for the prey and predator trajectories, respectively.

We perform Bayesian optimal experimental design to place five sensor/measurement times  based on two initial low-informative observations.
Table \ref{table_bed_lotka_volterra} shows the relative $L^2$ error norm for the posterior means of the prey and predator trajectories when averaged across $100$ different prey-predator trajectories. 
Our approach performs comparably to the FNO surrogate with oracle noise, where complete knowledge of the driving noise is assumed, and consistently outperforms the standard FNO surrogate across all types of design points.

\begin{table*}[!h]
\caption{Relative $L^2$ errors for the posterior means of the prey and predator trajectories for the sensor placement task. We average over $100$ different prey-predator trajectories.}
\label{table_bed_lotka_volterra}
\begin{center}
\footnotesize
\begin{tabular}{cccc} 
\multicolumn{1}{c}{\bf Method} & \multicolumn{1}{c}{\bf Design points} &  $\Vert \widehat{X}_1- X_{\text{tr},1}\Vert^2/\Vert X_{\text{tr},1}\Vert^2$   &$\Vert \widehat X_{2}-X_{\text{tr}, 2}\Vert^2/\Vert X_{\text{tr}, 2} \Vert^2$
\\ \hline \\
\multirow{ 3}{*}{Neural Coupling (Ours)} & Adaptive BED & $\mathbf{0.070\pm 0.066}$ & $\mathbf{0.068\pm 0.059}$\\
& Batch non-adaptive BED & $0.081\pm0.100$ & $0.078\pm0.068
$ \\
& Quasi-Monte Carlo sequence  & $0.101\pm0.090$ &  $0.095\pm0.073$\\
 \midrule
\multirow{ 3}{*}{\shortstack{FNO surrogate \\ \citep{li2021fourier}}}  & Adaptive BED      & $0.092\pm0.104$ & $0.091\pm0.199$\\
& Batch non-adaptive BED    & $0.111\pm0.163$ & $0.123\pm0.187$\\
& Quasi-Monte Carlo sequence  & $0.176\pm0.211$& $0.184\pm0.337$\\
\midrule
\midrule
\multirow{3}{*}{\shortstack{FNO w/ oracle noise surrogate \\ \citep{salvi2022neural}}} & Adaptive BED      & $\underline{0.056\pm 0.050}$ & $\underline{0.045\pm 0.041}$\\
& Batch non-adaptive BED    & $0.061\pm 0.052$  & $0.057\pm 0.055$\\
& Quasi-Monte Carlo sequence  &  $0.089\pm 0.049$ & $0.082\pm 0.060$\\
\end{tabular}
\end{center}
\end{table*}

\section{Experimental details: Datasets}\label{sec:details_datasets}
\subsection{Boundary Value Problem}
The dataset to train our neural coupling surrogate is of the form 
\begin{equation*}
    \left\{\left((a_i, b_i), u_i(x_j^i)\right)_{j=1}^{N_i}\right\}_{i=1}^M,
\end{equation*}
where $a, b\sim\text{Unif}[-3, 3]$, $x_j^i\in\Omega =[-1, 1]$ and $u$ represents the solution corresponding to a realisation of the random variables $X_a, X_b$. The solutions $u_i$ are computed using DOLFINx \citep{BarattaEtal2023}. To train the FNO with oracle noise baseline \citep{salvi2022neural}, we also store the particular realisation of $X_a$ and $X_b$ for each data point.

The dataset consists of 10000 pairs of boundary conditions and solutions. Each solution is evaluated at $N_i=30$ points in the domain $[-1, 1]$.

\subsection{Steady-State Diffusion}
The 2D Darcy flow equation is a second-order linear elliptic equation of the form
\begin{equation*}
    -\nabla\cdot\big(\kappa(\mathbf x)\nabla u(\mathbf x)\big) = f(\mathbf x) + \alpha\omega
\end{equation*}
with domain $\mathbf x \in \Omega= [0,1]^2$ and Dirichlet boundary conditions  $u|_{\partial \Omega} = 0$. The force term is kept fixed $f(x) = 0.5$, $\omega$ is space white noise and the diffusion coefficient is generated according to $\kappa\sim\mu$, where $\mu = \psi_\#\mathcal{N}(0, (-\Delta + 9I)^{-2})$ with zero Neumann boundary conditions on the Laplacian. 
The mapping $\psi:\mathbb{R}\to\mathbb{R}$ takes the value 12 on the positive part of the real line and $3$ on the negative and the push-forward is defined point-wise.
The solution is computed using the finite element method.

The dataset consists of 10000 pairs of diffusion coefficients and associated solutions, evaluated on a uniform $16\times 16$ mesh. The particular realisation of $\omega$ is also stored for benchmark comparisons.

\subsection{Navier Stokes Equation}
We consider a stochastic version of the 2D Navier Stokes equation for a viscous, incompressible fluid in vorticity form on the unit torus
\begin{gather*}
    \partial_t w(x,t) + u(x,t)\cdot \nabla w(x,t) = \nu \Delta w(x,t) + f(x) + \alpha\varepsilon,\\
    \nabla\cdot u(x,t) = 0,\quad
    w(x,0) = w_0(x),
\end{gather*}
where $x\in(0,1)^2,\ t\in (0, T]$, $u$ is the velocity field, $w=\nabla\times u$ is the vorticity, $\nu\in\mathbb{R}_+$ is the viscosity coefficient, which is set to $10^{-4}$, $f$ is the deterministic forcing function, which takes the form $f(x) = 0.1[\sin(2\pi(x_1+x_2)) + \cos(2\pi(x_1+x_2))]$, and $\varepsilon$ is the stochastic forcing given by $\varepsilon=\Dot{W}$ for $W$ being a Q-Wiener process which is coloured in space. 
The initial condition is generated according to $w_0\sim\mathcal{N}(0, 3^{3/2}(-\Delta + 49I)^{-3}))$ with periodic boundary conditions. 
Following \citet{salvi2022neural}, we solve the previous equation for each realisation of the Q-Wiener process using a pseudo-spectral solver, where time is advanced with a Crank-Nicolson update using a time step of size $10^{-3}$.
The SPDE is solved on a $64\times 64$ mesh in space. 

To form the training dataset, we subsample the trajectories $w_t$ by a factor of 4 in space (resulting in a $16 \times 16$ spatial resolution) and only consider the initial vorticity $w_0$ and the vorticity at times $t=1, 2, 3$. The dataset consists of 20000 trajectories.

\subsection{Stochastic Lotka-Volterra Model}
To obtain the training dataset, we simulate trajectories for the prey and predator. 
Specifically, we generate 1000 uniformly distributed initial conditions and simulate 100 stochastic trajectories of the model for each initial condition using an Euler-Maruyama scheme for Eq. \eqref{eq:lotka_volterra_model} with a sufficiently small time step $\Delta t = 0.01$ to ensure numerical stability. Each trajectory is evaluated in $30$ different time steps.

\section{Experimental details: Implementation}
All experiments were conducted on a GPU server consisting of eight Nvidia GTX 3090 Ti GPU cards, 896 GB of memory and 14TB of local on-server data storage. Each GPU has 10496 cores as well as 24 GB of memory.

\subsection{Training}
The training is done in two steps. First, we train the modulated INRs to encode the functional data in a finite dimensional latent space. 
Due to the generalisation capabilities of the INR, we train it on datasets of increasing size until the reconstruction error stabilises on a validation dataset. Figure \ref{fig:INR_analysis_performance} shows the evolution of the relative $L^2$ error for the INR reconstructed function, computed on a validation dataset, along with the training times for different percentages of the total dataset size (provided in section \ref{sec:details_datasets}).
It is important to remark that this is a practical choice to optimise one of the most computationally expensive steps — learning the INR — without sacrificing performance.
In all experiments, we use 300 epochs, a batch size of 16 and for the other hyper-parameters we keep the same values as \citet{dupont2022coin}. 
In subsequent experiments, we use 10\% of the dataset for training the INR, as this provides a balance between performance and computational cost.

Once, the INRs have been fitted, we obtain the latent representations of the functions of interest in the whole training dataset.
As the functional data is encoded using only a few steps of gradient descent (specifically 3 steps, for details see \citet{dupont2022coin}), the resulting standard deviation of the latent representations is very small, falling within the range of $[10^{-3}, 10^{-1}]$. Therefore, these raw latent modulations are not appropriate for subsequent processing. To address this, we standardise the codes by subtracting the mean and dividing by the standard deviation. 
Subsequently, the standardised latent modulations are concatenated in a single vector in cases where we seek for a prior over different functional parameters and solutions. 
The concatenated latent embeddings are then used to train the joint energy-based model. 
The final hyper-parameters for the joint energy-based model training are presented in Table \ref{table-training-details-EBM}. 
In addition, in all experiments we use Adam optimiser with a learning rate of $10^{-3}$ and an exponential scheduler.
Note that the means and standard deviations are stored to renormalise the samples generated by the joint energy-based model, which are then used as modulations for the INR to recover the functions in real space.

\begin{figure}[h]
    \centering

    \begin{subfigure}[b]{0.3\textwidth}
        \centering
        \includegraphics[width=\textwidth]{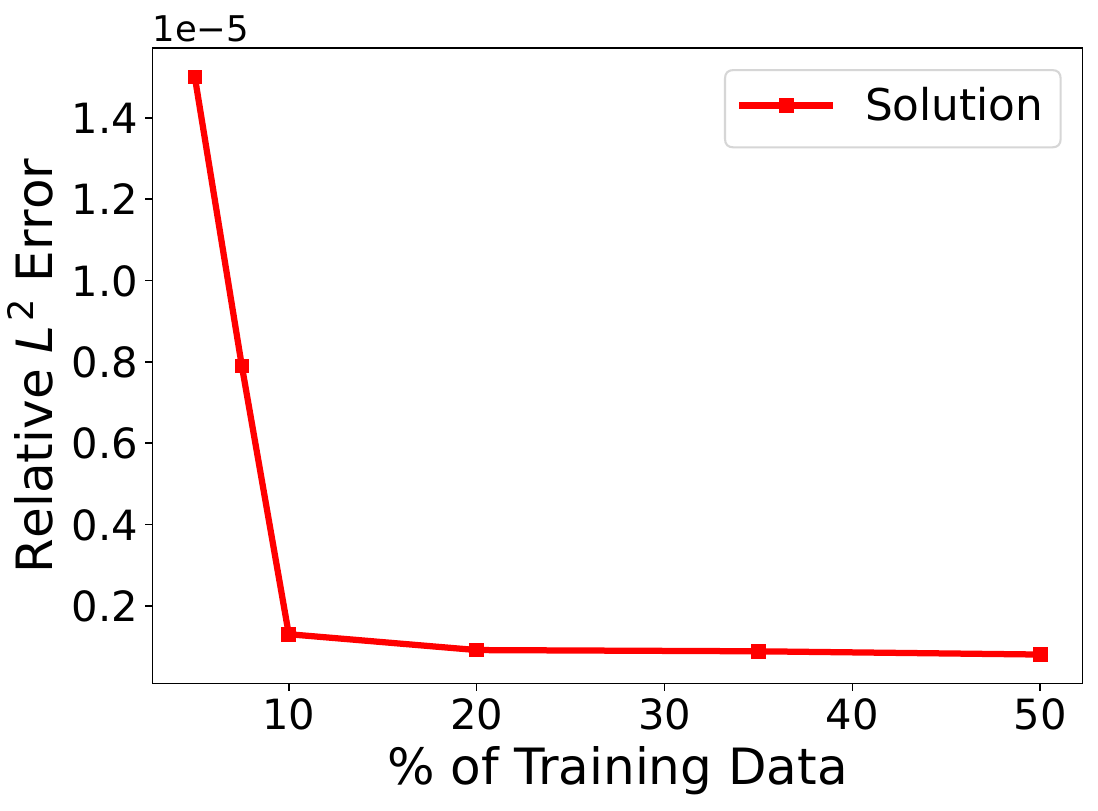}
        \caption{Boundary value problem}
        \label{fig:subfig_11_error_label}
    \end{subfigure}
        \begin{subfigure}[b]{0.3\textwidth}
        \centering
        \includegraphics[width=\textwidth]{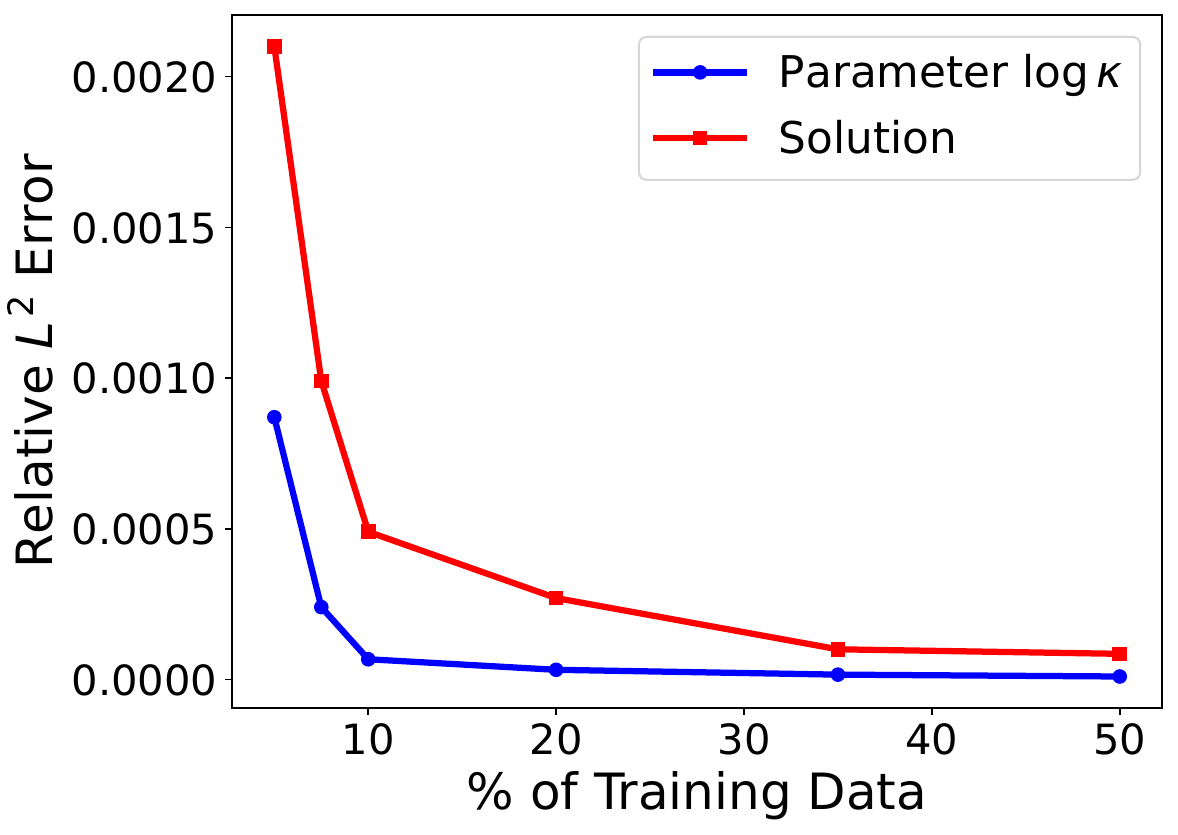}
        \caption{Darcy flow}
        \label{fig:subfig_33_error_label}
    \end{subfigure}
        \begin{subfigure}[b]{0.3\textwidth}
        \centering
        \includegraphics[width=\textwidth]{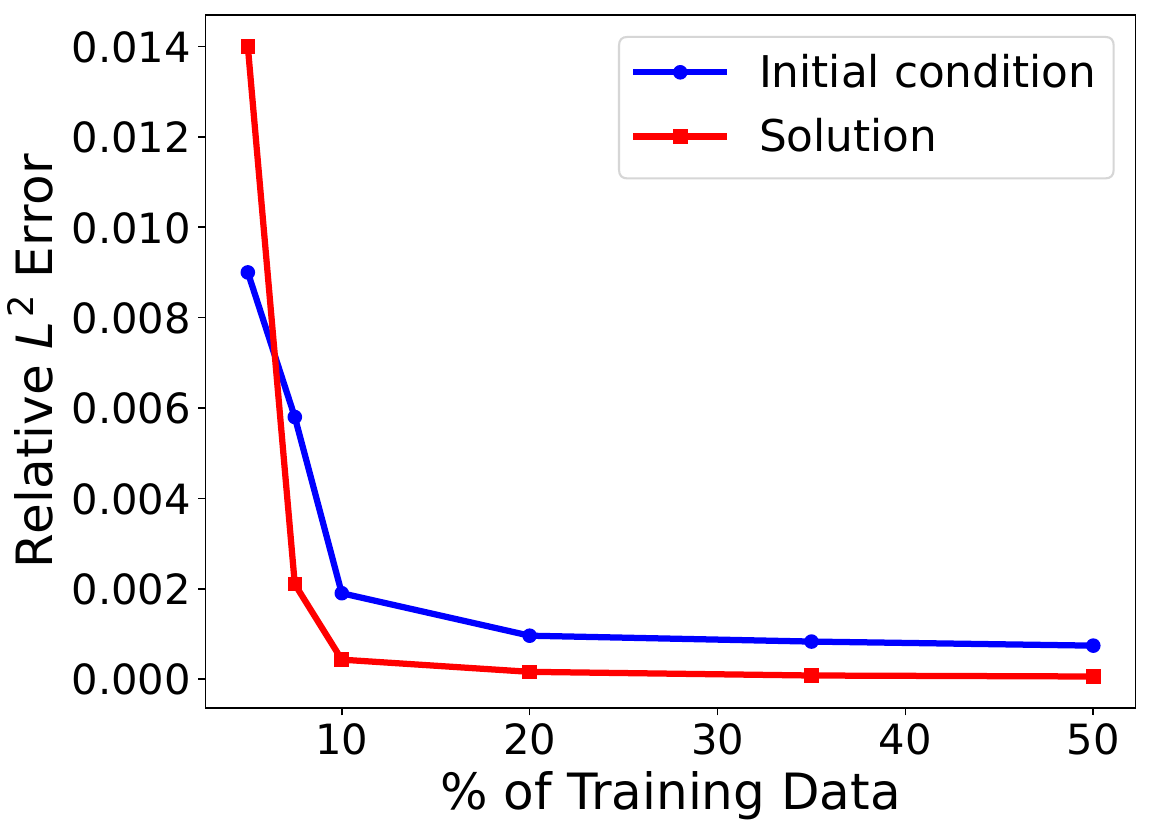}
        \caption{Navier Stokes}
        \label{fig:subfig_33_error_label_100}
    \end{subfigure}
    \vfill
    \begin{subfigure}[b]{0.3\textwidth}
        \centering
        \includegraphics[width=\textwidth]{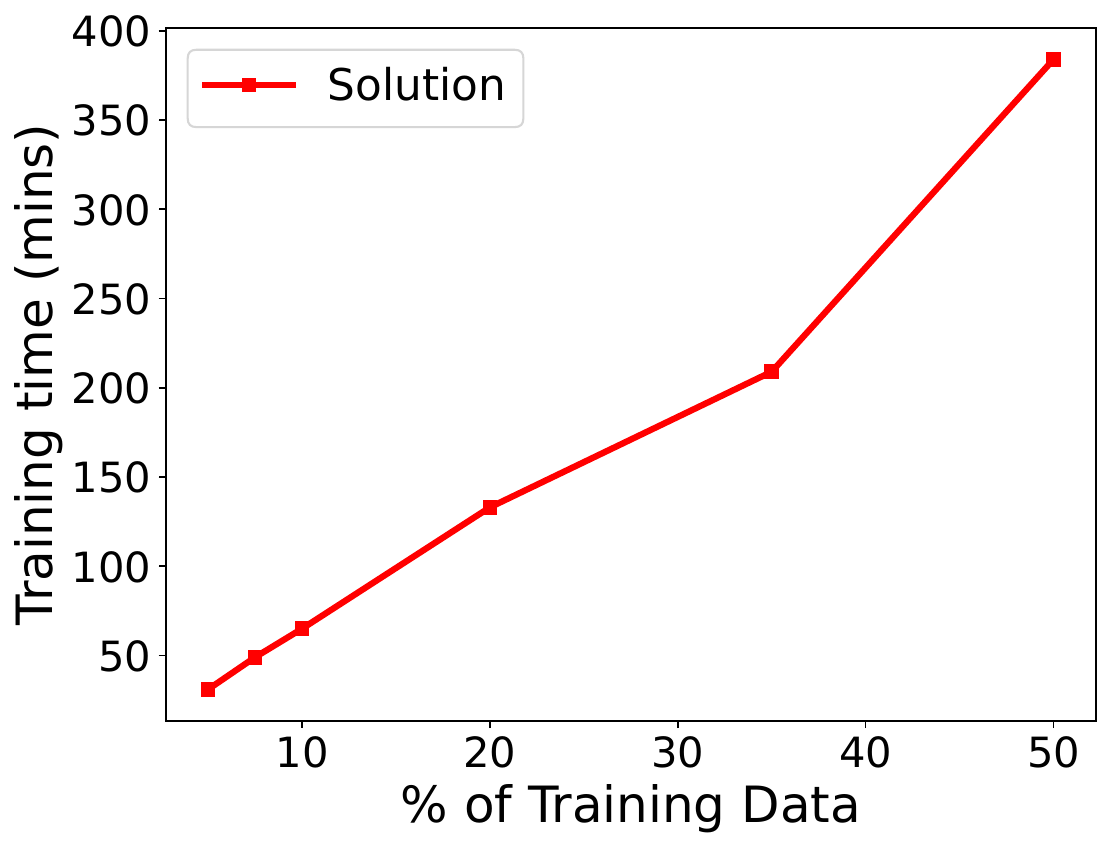}
        \caption{Boundary value problem}
        \label{fig:subfig_22_lppd}
    \end{subfigure}
    \begin{subfigure}[b]{0.3\textwidth}
        \centering
        \includegraphics[width=\textwidth]{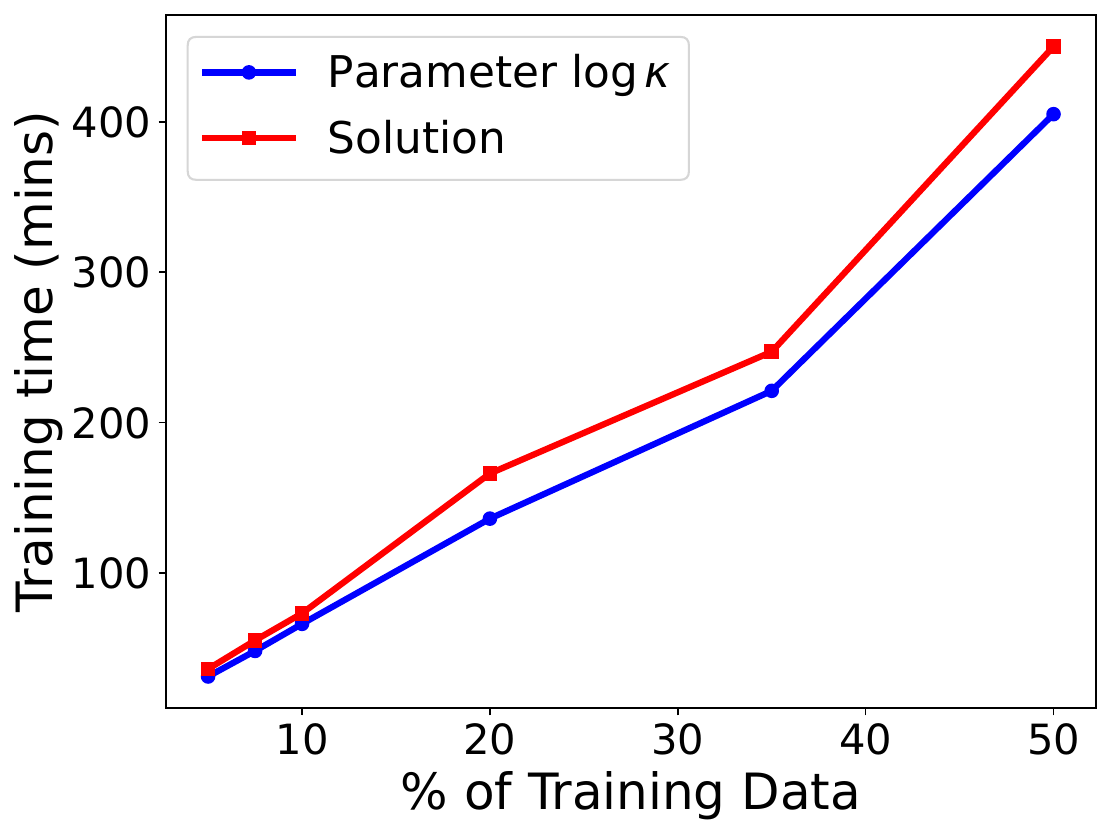}
        \caption{Darcy flow}
        \label{fig:subfig_44_lppd}
    \end{subfigure}
    \begin{subfigure}[b]{0.3\textwidth}
        \centering
        \includegraphics[width=\textwidth]{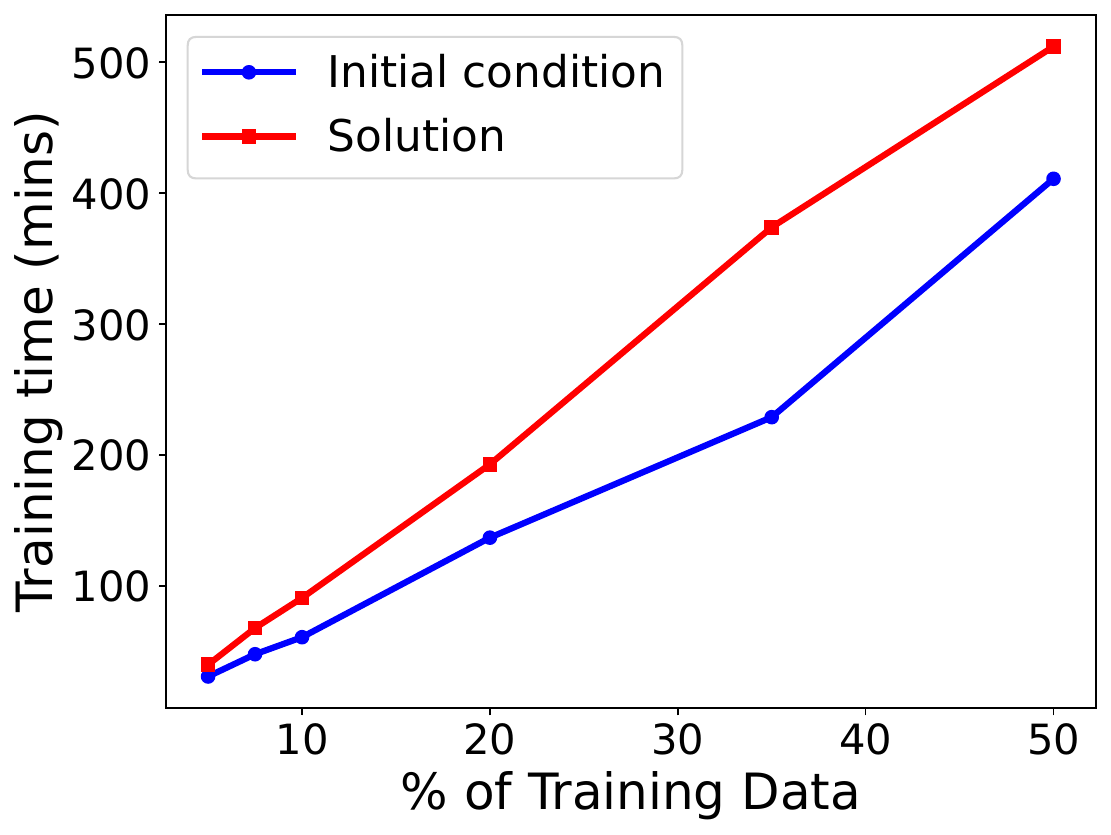}
        \caption{Navier Stokes}
        \label{fig:subfig_44_lppd_100}
    \end{subfigure}

    \caption{
    Evaluation of INR performance across different training data percentages.   
    Top: mean relative $L^2$ error norm for the INR reconstructed function, computed on a validation dataset of size 200. Bottom: training times of the INRs.}
    \label{fig:INR_analysis_performance}
\end{figure}

\begin{table}[h!]
  \caption{Final hyper-parameters for joint energy-based model in the different experiments.}
  \label{table-training-details-EBM}
  \begin{center}
  \begin{tabular}{ccccc}
      Dataset & $t$ & $M$ & $w$ & Epochs
   \\ \hline \\
     Boundary Value Problem &  1 & 4 & 1 & 1000\\
    Steady-State Diffusion & 0.5 & 16 & 1 & 1000\\
    Navier Stokes & 1 & 32 & 1 & 1000\\
    Lotka-Volterra Model & 1 & 4 & 1 & 1500\\
  \end{tabular}
  \end{center}
\end{table}

\subsection{Architectures}
\subsubsection{Implicit Neural Representation}
We have only made minor changes to the COIN++ model proposed by \citet{dupont2022coin}, so that it can take arbitrary point evaluations of the functions of interest. 
The number of layers and the dimension of each layer remain the same.
The initialisation scheme is the same as the one proposed in \citet{sitzmann2019siren}.

\subsubsection{Energy-Based Model}
In all our experiments, each training point for the joint energy-based model consists of the concatenation of the latent representations of the stochastic PDE solution and the functional or vector-valued parameter of the stochastic PDE, and our goal is to understand the connection between the two by learning their joint probability density.
To do this, we build on \citet{RefWorks:RefID:78-wang2022improved} to design the neural network architecture for the energy function. First, we uplift each part of the input vector into a latent space (using an encoder) so that they have the same dimension (equal to 128) and then propagate and merge them, with shared connections between the two branches. The architecture of the network is illustrated in Figure~\ref{fig:architecture}.

\begin{figure}[h]
  \centering
 \includegraphics[scale=0.75]{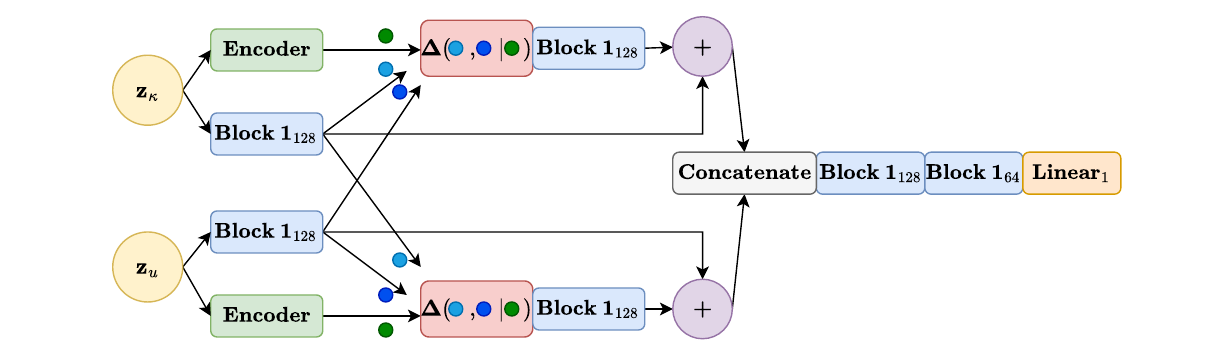}
  \caption{Energy-based model neural network architecture, where $\mathbf z_\kappa$ and $\mathbf z_u$ represent the finite dimensional latent embeddings of two functions, such as, the parameter and solution of the PDE.}
  \label{fig:architecture}
\end{figure}

The specific structure of each element of the architecture is the following
\begin{itemize}
    \item Encoder block
\begin{equation*}
    \text{Encoder}(\mathbf{x}) = \text{Linear} (\sigma \circ \text{Linear} (\boldsymbol{y}) + \boldsymbol{y}), \quad \boldsymbol{y} = \sigma \circ \text{Linear}(\mathbf{x}),
\end{equation*}
where $\sigma$ is a GELU \citep{hendrycks2016gelu} activation function.
\item Block 1
\begin{equation*}
     \text{Block 1}_{k}(\mathbf{x}) = \sigma \circ \text{Linear}(\mathbf{x}),
\end{equation*}
where $\sigma$ is a RELU activation and $k$ denotes the output dimension.
\item $\Delta$ operation
\begin{equation*}
    \Delta(\mathbf{x},\mathbf{y}|\mathbf{z}) = (1-\mathbf{z})\cdot \mathbf{x} + \mathbf{z}\cdot \mathbf{y},
\end{equation*}
where $\cdot$ denotes point-wise multiplication. Note that the vectors $\mathbf{x}, \mathbf{y}$ and $\mathbf{z}$ need to have the same dimension and the operation is just an interpolation.
\item $\boldsymbol{+}$ operation is a point-wise addition.
\end{itemize}
The dimension remains constant at 128 in the 2 branches of the architecture. Therefore, when both vectors are concatenated, it becomes 256. The last two Block 1's reduce the dimension from 256 to 128 and from 128 to 64, respectively.

\subsection{Optimal Bayesian Experimental Design for Sensor}
To estimate the utility function as presented in Appendix \ref{sec:sensor_placement_appendix} we use the following number of samples $N,L=50$ for all experiments. 
To obtain them, we sample from the prior, or updated posterior in subsequent iterations, using stochastic gradient Langevin dynamics for a total of 1000 steps, and take the last ones to calculate the utility.

\subsection{Choice of Hyperparameters}
Our method requires selecting the dimensions of the latent representations of the functional parameters and the solutions of the stochastic PDEs for the different numerical experiments. 
The latent dimension is selected based on the criterion that the mean relative $L^2$ error norm between the true function and the function reconstructed from the latent embedding using the INR remains below a certain threshold across all test data points. 
At the same time, we want the dimension to be low to ensure that the resulting distribution of the latent representations has a positive probability density that can be easily modelled by the energy-based model. 

Tables~\ref{table_bvp}, \ref{table_darcy}, \ref{table_navier_stokes} and \ref{table_lotka_volterra} show the mean relative $L^2$ error norms for different latent dimensions computed on a validation set of size 200 not seen during training for the solutions of the boundary value problem, the Darcy flow, the Navier Stokes equation, and the Lotka-Volterra model, respectively.

The final latent dimensions for the different numerical experiments are the following.
\paragraph{Boundary Value Problem} Recall that in this case the parameter is given by a two-dimensional real-valued vector. The latent dimension for the solution is 11.

\paragraph{Steady-State Diffusion} 
The latent dimensions for the diffusion coefficient and the solution are 32 and 16, respectively.

\paragraph{Navier Stokes Equation}
The latent dimensions for the initial vorticity and the vorticity at three different time snapshots $t=1,2 ,3$ are both 32.

\paragraph{Lotka Volterra Model} 
The latent dimension for both prey and predator trajectories is 11.

\begin{table}[h!]
\caption{Mean relative $L^2$ error norm for the INR reconstructed solution of the boundary value problem on a validation set of size 200.}
\label{table_bvp}
\begin{center}
\begin{tabular}{ccc}
\multicolumn{1}{c}{\bf Dimension}  &\multicolumn{1}{c}{$\Vert \widehat u-u_{\text{tr}}\Vert_{L^2}^2/\Vert u_{\text{tr}}\Vert_{L^2}^2$}
\\ \hline \\
15     & $6.4\times 10^{-7}$  \\
13      &$9.2\times 10^{-7}$ \\
11      &$1.3\times 10^{-6}$ \\
9      &$9.7\times 10^{-6}$ \\
7      &$8.6\times 10^{-5}$ \\
\end{tabular}
\end{center}
\end{table}

\begin{table}[h!]
\caption{Mean relative $L^2$ error norm for the INR reconstructed diffusion coefficient and solution of the Darcy flow equation on a validation set of size 200.}
\label{table_darcy}
\begin{center}
\begin{tabular}{ccc}
\multicolumn{1}{c}{\bf Dimension}  & $\Vert \log\widehat{\kappa}-\log\kappa_{\text{tr}}\Vert_{L^2}^2/\Vert \log\kappa_{\text{tr}}\Vert_{L^2}^2$ & \multicolumn{1}{c}{$\Vert \widehat u-u_{\text{tr}}\Vert_{L^2}^2/\Vert u_{\text{tr}}\Vert_{L^2}^2$}
\\ \hline \\
64 &  $5.4\times 10^{-6}$   & $7.1\times 10^{-8}$  \\
48 &  $8.3\times 10^{-5}$   & $6.5\times 10^{-6}$  \\
32 &  $6.7\times 10^{-5}$   &$5.8\times 10^{-5}$ \\
16  & $2.6\times 10^{-4}$   &$4.9\times 10^{-4}$ \\
8  &  $1.9\times 10^{-2}$  &$3.2\times 10^{-3}$ \\
\end{tabular}
\end{center}
\end{table}

\begin{table}[h!]
\caption{Mean relative $L^2$ error norm for the INR reconstructed initial vorticity and the vorticity at $t=1, 2, 3$, $\underline{w}_t$ of the Navier Stokes equation on a validation set of size 200.}
\label{table_navier_stokes}
\begin{center}
\begin{tabular}{ccc}
\multicolumn{1}{c}{\bf Dimension}   &  $\Vert \widehat{w}_0- w_{\text{tr},0}\Vert_{L^2}^2/\Vert w_{\text{tr},0}\Vert_{L^2}^2$   &$\Vert \widehat{\underline{w}}_{t}-\underline{w}_{\text{tr}, t}\Vert_{L^2}^2/\Vert \underline{w}_{\text{tr}, t} \Vert_{L^2}^2$
\\ \hline \\
64      & $3.5\times 10^{-4}$ &  $5.7\times 10^{-5}$\\
48      & $8.1\times 10^{-4}$ & $1.1\times 10^{-4}$ \\
32      & $1.9\times 10^{-3}$ & $4.3\times 10^{-4}$ \\
16      & $9.5 \times 10^{-3}$ & $9.2\times 10^{-4}$ \\
8      & $2.2\times 10^{-2}$ & $6.8\times 10^{-3}$ \\
\end{tabular}
\end{center}
\end{table}

\begin{table}[h!]
\caption{Mean relative $L^2$ error norm for the INR reconstructed prey and predator trajectories on a validation set of size 200.}
\label{table_lotka_volterra}
\begin{center}
\begin{tabular}{ccc}
\multicolumn{1}{c}{\bf Dimension}  &$\Vert \widehat{X}_1- X_{\text{tr},1}\Vert_{L^2}^2/\Vert X_{\text{tr},1}\Vert_{L^2}^2$   &$\Vert \widehat X_{2}-X_{\text{tr}, 2}\Vert_{L^2}^2/\Vert X_{\text{tr}, 2} \Vert_{L^2}^2$
\\ \hline \\
15      & $8.5\times 10^{-7}$  &  $7.8\times 10^{-7}$ \\
13      & $1.1\times 10^{-6}$ & $9.9\times 10^{-7}$ \\
11      & $3.3 \times 10^{-6}$ & $3.0\times 10^{-6}$ \\
9      & $9.9\times 10^{-6}$ & $8.4\times 10^{-6}$ \\
7      & $1.0\times 10^{-4}$ & $7.1\times 10^{-5}$ \\
\end{tabular}
\end{center}
\end{table}

\subsection{Times}
We present the data generation, training and inference times to assess the efficiency of our framework compared to classical approaches that use PDE solves within the MCMC. 
For data generation, we first obtain 10\% of the training dataset, which is the amount needed to train the INRs, while the remaining data can be generated in parallel with the training of the INRs for the functional parameter and the stochastic PDE solution. Additionally, the training of both INRs can also be performed in parallel.

After training the INRs we need to compute the latent codes $\mathbf{z}$ on the whole dataset, however this time is around a minute and therefore insignificant compared to the other times reported in Table \ref{table-times-ns}.  
The inference times correspond to the average time required to obtain an approximation of the posterior distribution (conditioned on the final number of observations used in the sensor placement task) by sampling 1000 iterations per chain for two independent chains using stochastic gradient Langevin dynamics. 
We note that the inference times are negligible compared to the training and data generation times, which are both performed off-line and only have to be done once. 
This makes our approach highly efficient for downstream tasks like sensor placement. 
For instance, in the case of the stochastic Navier Stokes equation, the adaptive Bayesian sensor placement loop requires about 20 minutes to identify fifteen new locations using our surrogate model. In contrast, conventional methods would take several days to achieve the same result, which exceeds the combined training and inference time of our approach.

\begin{table}[h!]
  \caption{Data generation, training and inference times for the different numerical examples.}
  \label{table-times-ns}
  \centering
  \begin{tabular}{llllll}
  \toprule
    Dataset   & \multicolumn{2}{c}{Data generation} & \multicolumn{2}{c}{Training} & Inference\\
    \cmidrule(r){2-3}
    \cmidrule(r){4-5}
       & 10\% & Rest & INR & EBM & \\
    \hline
    \\
    Boundary Value Problem & 1 mins & 9 mins & 65 mins& 34 mins& 1.1 mins\\
   \hline 
    Steady-State Diffusion & 8 mins &  70 mins& 73 mins& 40 mins& 2.5 mins\\
   \hline 
    Navier Stokes & 30 mins & 4 hours & 91 mins & 47 mins& 3.2 mins\\
    \hline 
    Lotka Volterra Model & 3 mins &  27 mins & 67 mins & 38 mins& 1.4 mins\\
    \bottomrule
  \end{tabular}
\end{table}

\subsection{Benchmarks}
In this section, we explain in more detail the FNO \citep{li2021fourier} and FNO with oracle noise \citep{salvi2022neural} baselines.
For all the numerical examples except for the Navier Stokes equation, the FNO with oracle noise reduces to an FNO with noise as an additional input, as the dynamics in these cases are temporally independent. 
The architecture and hyperparameters used for training the benchmarks are described below.

\paragraph{Boundary Value Problem} 
The FNO and the FNO with oracle noise are trained on 20000 epochs with a learning rate of $3\times 10^{-4}$. 
The hyperparameters used are as follows: 16 modes, 64 hidden channels and 4 Fourier layers.
It is worth noting that the FNO learns a mapping between function spaces, whereas our initial condition is represented by a two-dimensional real-valued vector. To address this, we learn a mapping between a linear interpolation of the boundary conditions and the solution.

\paragraph{Steady State Diffusion}
The FNO and the FNO with oracle noise are trained on 20000 epochs with a learning rate of $1\times 10^{-4}$. 
The hyperparameters used are as follows: 12 modes per dimension, 32 hidden channels and 4 Fourier layers.

\paragraph{Navier Stokes Equation}
The FNO is trained on 20000 epochs with a learning rate of $1\times 10^{-4}$. 
The hyperparameters used are as follows: 12 modes, 32 hidden channels and 4 Fourier layers.
For the FNO with oracle noise we use the same hyperparameters and training details specified in \citet{salvi2022neural}.

\paragraph{Lotka-Volterra Model}
In this example the neural operator baselines learn a mapping between the prey and predator trajectories.
Both baselines are trained on 15000 epochs with a learning rate of $5\times 10^{-3}$. 
The hyperparameters used are as follows: 16 modes, 64 hidden channels and 4 Fourier layers.

The neural operator benchmarks learn a deterministic mapping between inference parameters and the corresponding solution, in contrast to our neural coupling model, which learns a joint probability distribution over both parameters and solutions.

Once these baselines are trained on simulation data, they can be used for the sensor placement task. To achieve this, we need to be able to sample from the posterior distribution of the initial parameters, conditioned on the observations of the solution. In particular, the posterior distribution takes the form
\begin{align*}
    p(\kappa| \mathcal{D}) &\propto \prod_{i = 1}^M p(y_{i}|\kappa, \mathbf{x}_i)p(\kappa)  = \prod_{i = 1}^M \mathcal{N}(y_{i}|\textbf{NO}(\kappa)(\mathbf{x}_i), \sigma^2) p(\kappa),
\end{align*}
where \textbf{NO} denotes the neural operator surrogate which takes $\kappa$ as input and $p(\kappa)$ denotes the prior distribution over $\kappa$.

\section{Additional Results}
In the numerical experiments presented in the main text, we assume a fixed computational budget, meaning that the number of sensor locations for observations used in inference is fixed in advance.
However, it is interesting to analyse how the posterior contracts to the true value as new observations are included. To investigate this, we track the evolution of the relative $L^2$ error for the posterior mean for an increasing number of observations, comparing those taken from a quasi-random sequence with those obtained through adaptive optimisation of the locations. Results for the Darcy flow and Navier Stokes equation are provided in Figure \ref{fig:contraction_posterior}. We observe that the errors decrease faster for optimally selected points, chosen in an adaptive way, versus randomly chosen locations.

\begin{figure}[h]
    \centering

    \begin{subfigure}[b]{0.4\textwidth}
        \centering
        \includegraphics[width=\textwidth]{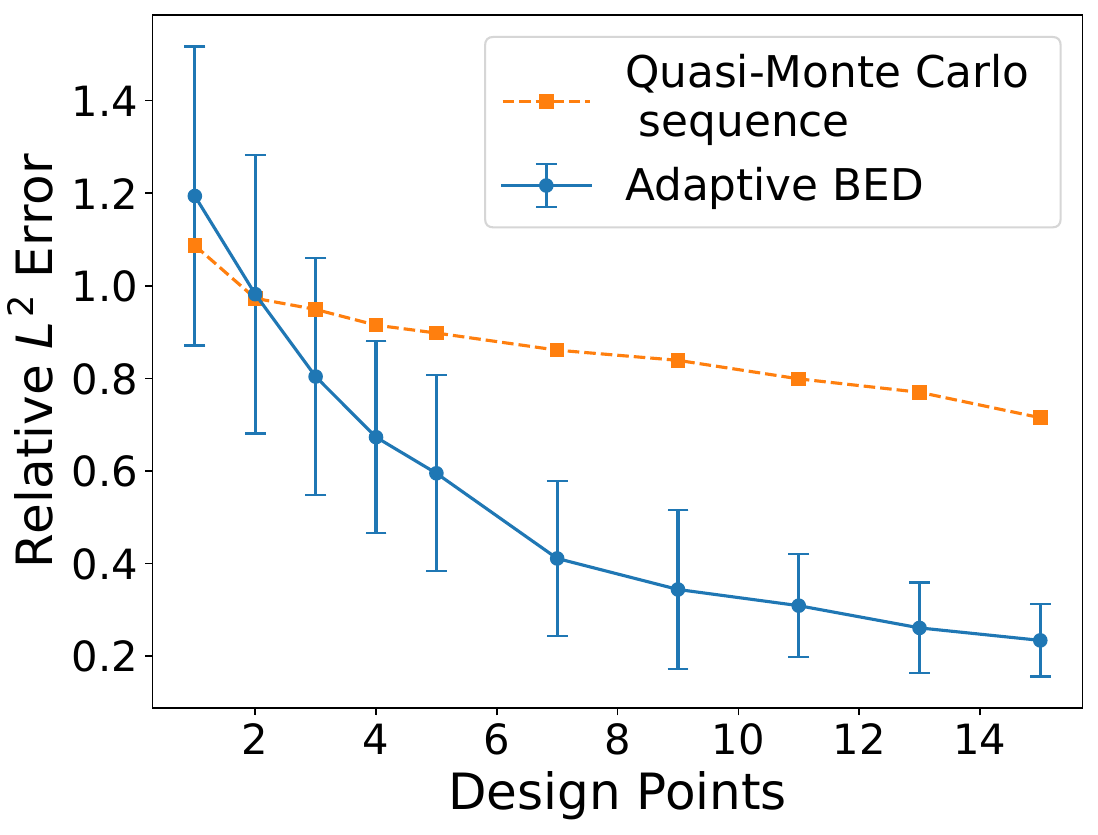}
        \caption{Darcy flow. Diffusion coefficient $\log \kappa$}
        \label{fig:subfig_1}
    \end{subfigure}
        \begin{subfigure}[b]{0.4\textwidth}
        \centering
        \includegraphics[width=\textwidth]{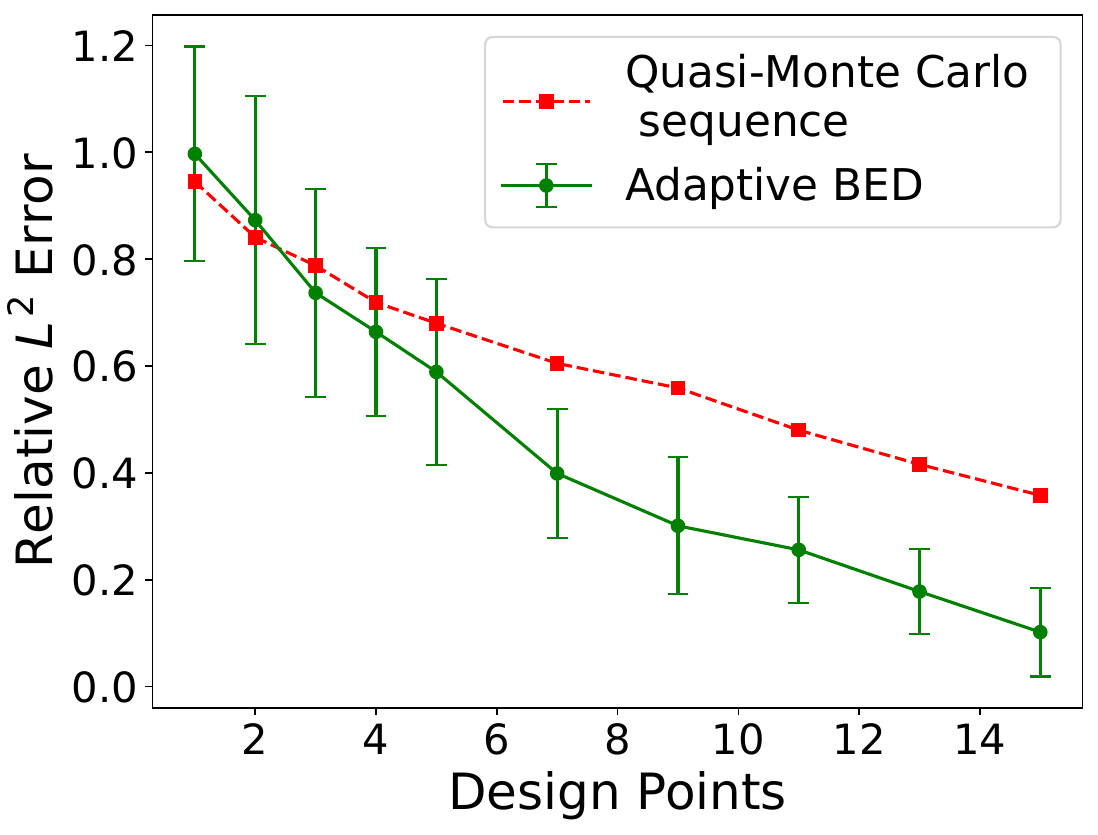}
        \caption{Darcy flow. Solution $u$}
        \label{fig:subfig_2}
    \end{subfigure}
    \vfill
    \begin{subfigure}[b]{0.4\textwidth}
        \centering
        \includegraphics[width=\textwidth]{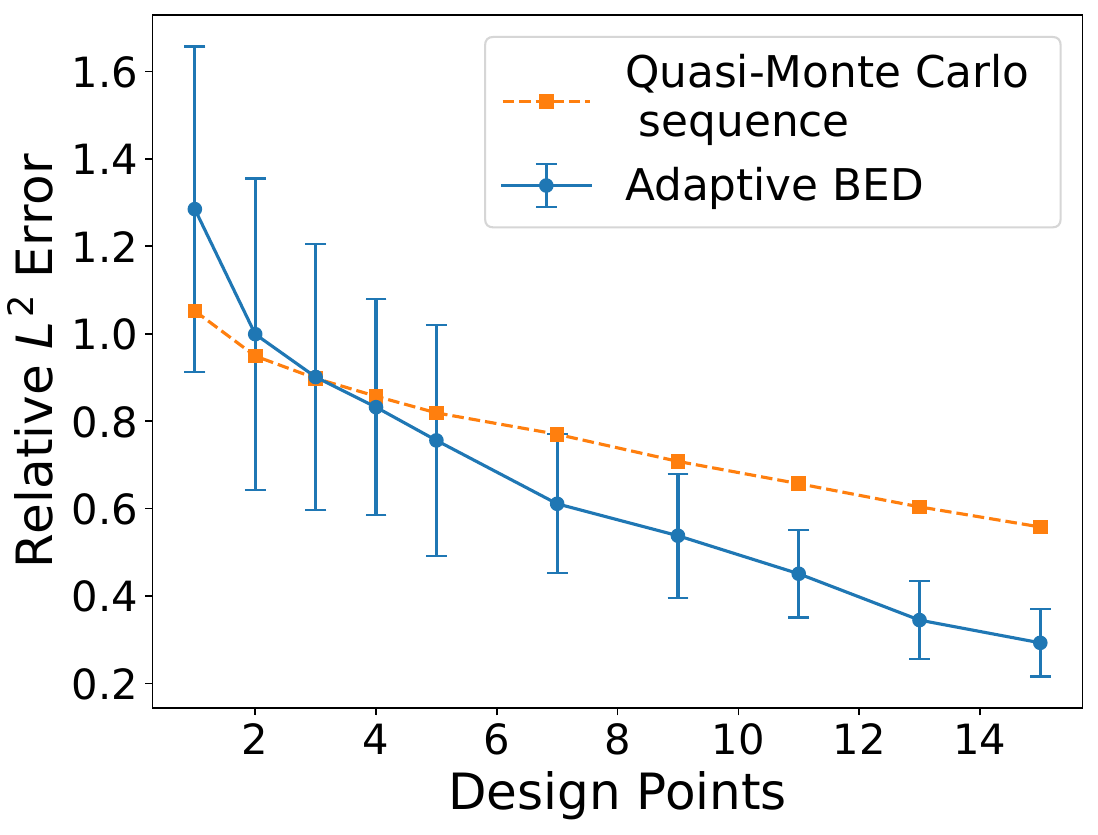}
        \caption{Navier Stokes. Initial vorticity $w_0$}
        \label{fig:subfig_3}
    \end{subfigure}
    \begin{subfigure}[b]{0.4\textwidth}
        \centering
        \includegraphics[width=\textwidth]{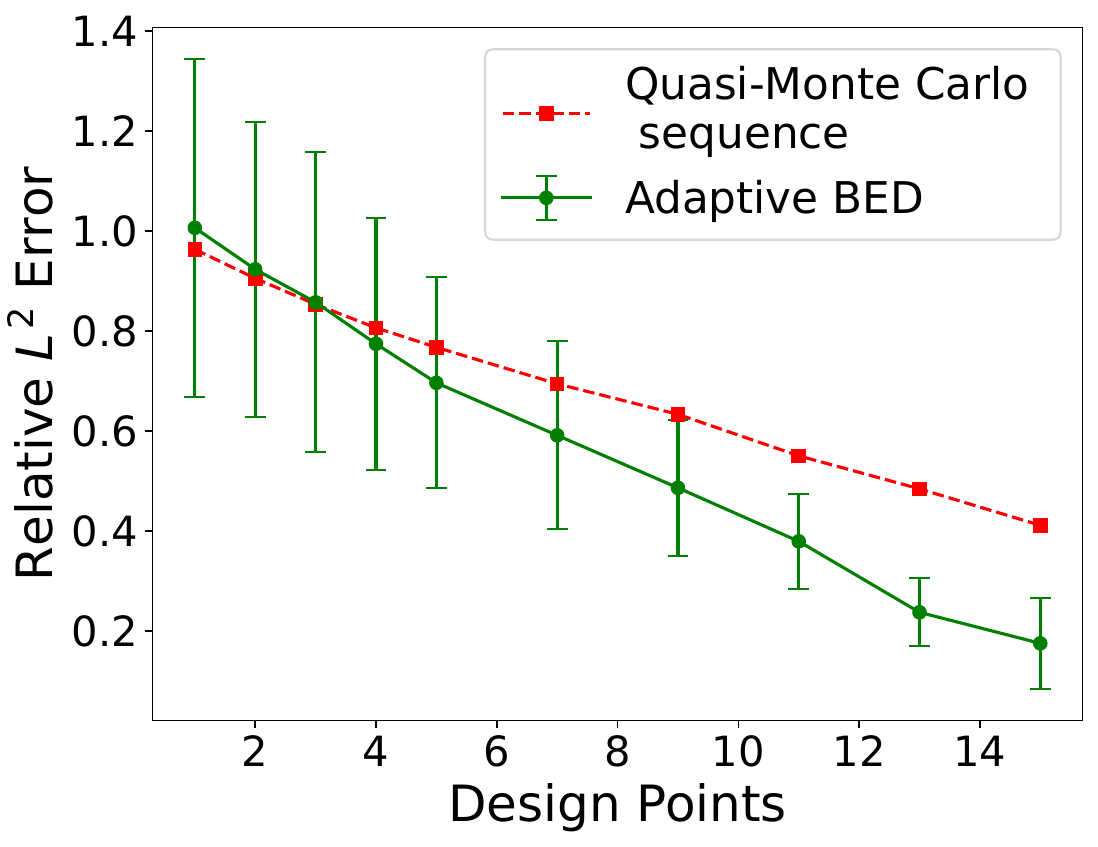}
        \caption{Navier Stokes. Vorticity at $t=1, 2, 3$, $\underline{w}_t$}
        \label{fig:subfig_4}
    \end{subfigure}

    \caption{
     Relative $L^2$ error norm for the posterior mean of the initial condition or parameter and the solution for the sensor placement experiment on the Darcy flow and Navier Stokes equations using an increasing number of observations. We take the average over 50 sensor placement loops.}
    \label{fig:contraction_posterior}
\end{figure}

\end{document}